\definecolor{lightgray}{gray}{.9}
\definecolor{forestgreen}{RGB}{0,128,69}
\definecolor{cvprpink}{RGB}{219,112,147}
\definecolor{cvpryellow}{RGB}{255, 191, 0}
\definecolor{cvprblue}{rgb}{0.21,0.49,0.74}
\newtheorem{theorem}{Theorem}
\newcolumntype{I}{!{\vrule width 1pt}}
\newcommand{\thickhline}{%
    \noalign {\ifnum 0=`}\fi \hrule height 1pt
    \futurelet \reserved@a \@xhline
}
\begin{document}
\title{Flow-Modulated Scoring for Semantic-Aware Knowledge Graph Completion}
\author{Siyuan Li~\orcidlink{0009-0005-0035-5295},
        Ruitong Liu~\orcidlink{0000-0003-3354-9617},
		Yan Wen~\orcidlink{0000-0003-3354-9617},
        Te sun~\orcidlink{0000-0003-3354-9617},
        Andi Zhang~\orcidlink{0000-0002-5669-9354}
        Yanbiao Ma~\orcidlink{0000-0002-8472-1475},
        Xiaoshuai Hao~\orcidlink{0000-0002-5669-9354} \\

\thanks{Siyuan Li, Ruitong Liu and Te Sun are with the Dalian University of Technology, China. Yan Wen is with the Beijing Institute of Technology, China. Andi Zhang is with the University of Manchester, U.K. Yanbiao Ma is with the Gaoling School of Artificial Intelligence, Renmin University of China. Xiaoshuai Hao is with the Beijing Academy of Artificial Intelligence, China.}
\thanks{Correspondence author: Yanbiao Ma, Xiaoshuai Hao}
\thanks{E-mail: ybma1998xidian@gmail.com, haoxiaoshuai714@163.com)}
}

\IEEEtitleabstractindextext{%
\begin{abstract}
Knowledge graph completion demands effective modeling of multifaceted semantic relationships between entities. Yet, prevailing methods, which rely on static scoring functions over learned embeddings, struggling to simultaneously capture rich semantic context and the dynamic nature of relations. To overcome this limitation, we propose the Flow-Modulated Scoring (FMS) framework, conceptualizing a relation as a dynamic evolutionary process governed by its static semantic environment. FMS operates in two stages: it first learns context-aware entity embeddings via a Semantic Context Learning module, and then models a dynamic flow between them using a Conditional Flow-Matching module. This learned flow dynamically modulates a base static score for the entity pair. By unifying context-rich static representations with a conditioned dynamic flow, FMS achieves a more comprehensive understanding of relational semantics. 
Extensive experiments demonstrate that FMS establishes a new state of the art across both canonical knowledge graph completion tasks: relation prediction and entity prediction. On the standard relation prediction benchmark FB15k-237, FMS achieves a near-perfect MRR of 99.8\% and Hits@1 of 99.7\% using a mere 0.35M parameters, while also attaining a 99.9\% MRR on WN18RR. Its dominance extends to entity prediction, where it secures a 25.2\% relative MRR gain in the transductive setting and substantially outperforms all baselines in challenging inductive settings. By unifying a dynamic flow mechanism with rich static contexts, FMS offers a highly effective and parameter-efficient new paradigm for knowledge graph completion.
Code published at: \url{https://github.com/yuanwuyuan9/FMS}.
\end{abstract}

\begin{IEEEkeywords}
Knowledge Graph, Flow-Matching, Graph Neural Network, Knowledge Representation and Reasoning.
\end{IEEEkeywords}}

\maketitle

\IEEEdisplaynontitleabstractindextext

\IEEEpeerreviewmaketitle

\section{Introduction}

Knowledge Graphs (KGs), as a robust formalism for knowledge representation, have become pivotal in a myriad of large-scale applications, including recommender systems~\cite{cui2025RS,chen2025data}, information extraction~\cite{zhang2025comprehensive}, and question answering~\cite{omar2023universal,tkde2}, owing to their remarkable efficacy in organizing and representing complex semantic facts~\cite{nicholson2020constructing,hogan2021knowledge,dong2023hierarchy,survey4kg2024}. However, real-world KGs, despite their vast scale, are ubiquitously plagued by incompleteness~\cite{tkde1,ji2021survey,tpami1}. This issue stems from the continuous evolution of knowledge itself and the high cost and complexity associated with its acquisition. To address this fundamental challenge, the task of Knowledge Graph Completion (KGC) has emerged, which aims to infer and predict missing links based on existing facts~\cite{tkde3,sun2019rotate,tpami2,tkde7}.

\begin{figure}[t]
    \centering
    \includegraphics[width=1.0\linewidth]{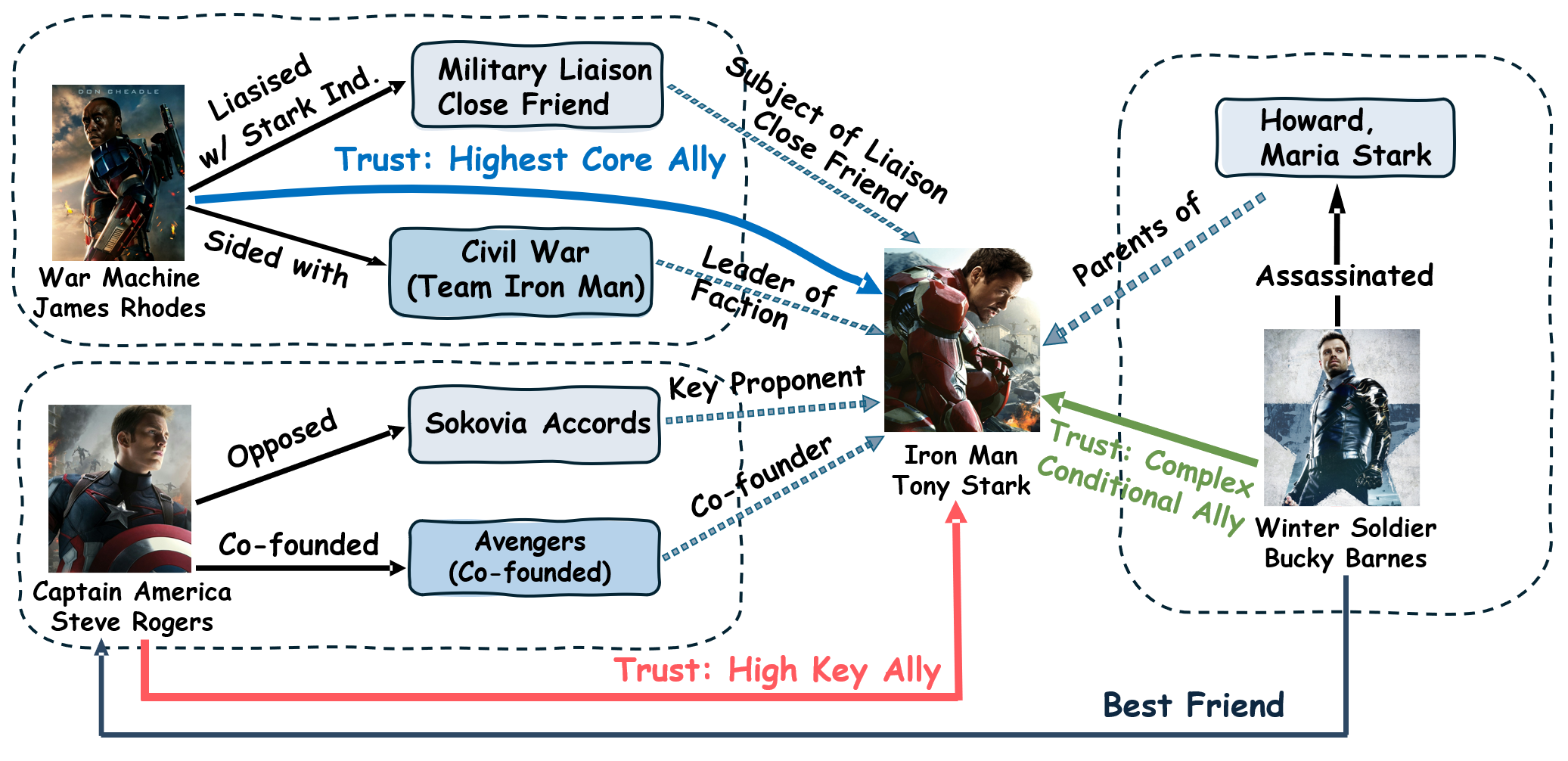}
    \vskip -0.2in
    \caption{The context-dependent nature of relational semantics, exemplified by Iron Man's alliances. The generic relation "ally" acquires distinct meanings (\textit{e.g.}, Highest Core, High Key, Complex Conditional) based on specific contextual facts, such as shared history or critical conflicts.}
    \label{fig:iron_man_relations}
    \vskip -0.1in
\end{figure}

\begin{figure}[t]
    \centering
    \includegraphics[width=1.0\linewidth]{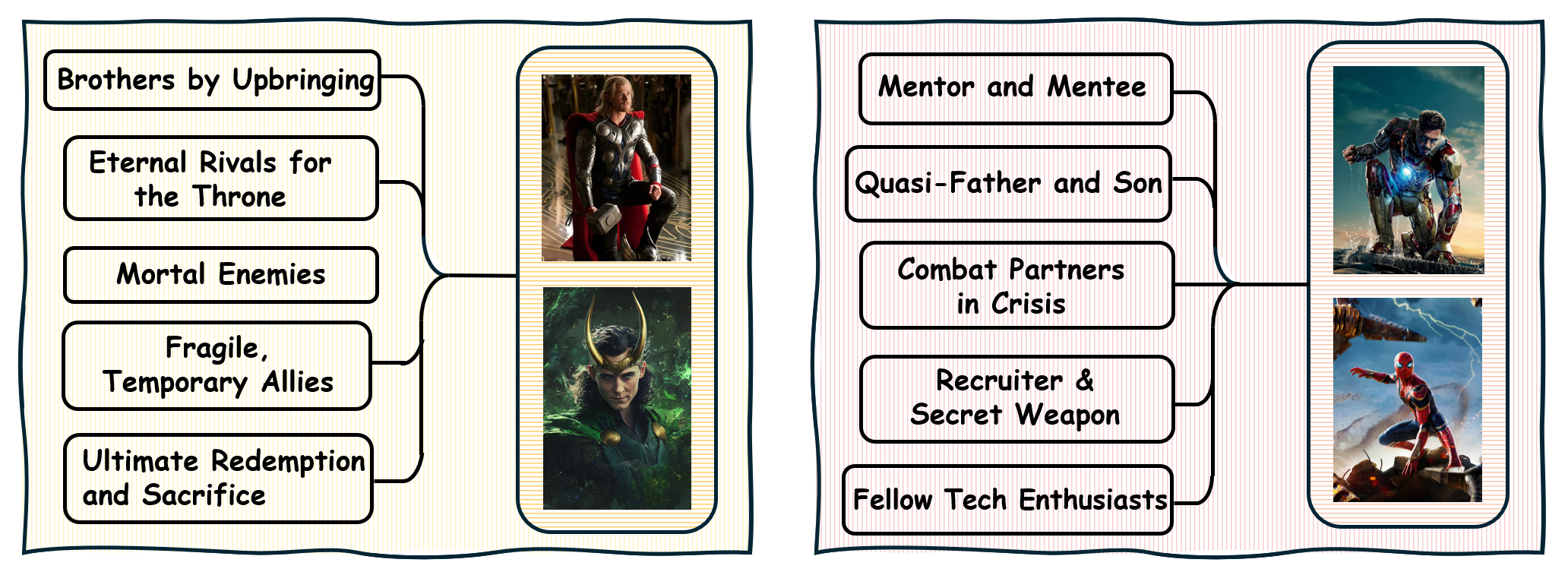}
    \vskip -0.15in
    \caption{Illustration of the dynamic evolution of relationships. The examples show the changing dynamics over time between (left) Thor and Loki, and (right) Iron Man and Spider-Man, highlighting that relationships progress through distinct stages.}
    \label{fig:dynamic_relation}
\vskip -0.1in
\end{figure}

Among the sub-tasks of KGC, Relation Prediction (RP) is crucial for enabling fine-grained knowledge reasoning~\cite{teru2020inductive,tpami3}. It is dedicated to inferring the set of potential relations $\mathcal{R}$ for a given entity pair $(h, t)$~\cite{wang2021relational}. Existing methodologies for this task predominantly fall into two categories. The first category comprises embedding-based models, such as TransE \cite{bordes2013translating}, TransH \cite{wang2014knowledge}, and DistMult \cite{yang2014embedding}, which assess the plausibility of a triplet by learning static entity and relation embeddings within a fixed scoring function. The second category consists of GNN-based methods that aggregate contextual information \cite{jagvaral2020path,tkde4}, like PathCon \cite{wang2021relational}, which leverages multi-hop neighborhood structures to enrich semantic representations. Despite the progress achieved, these methods are constrained by two fundamental bottlenecks:

\textbf{1) Inability to Capture Rich Contextual Information Shaping Relation Semantics.} The meaning of a relation is often highly contingent on its surrounding factual context~\cite{wang2021relational,aaai25RP,tpami2}. As illustrated in \autoref{fig:iron_man_relations}, the generalized label ``ally" is insufficient to accurately describe the relationships between Iron Man and other characters. His relationship with War Machine, rooted in their military liaison, is a ``core, top-level alliance," whereas his relationship with Captain America is a "key, high-level alliance" shaped by their shared history of team-building and subsequent political disagreements. This indicates that effective relation prediction must account for the subtle contextual nuances that differentiate superficially similar yet substantially distinct relations.
\textbf{2) Incapacity of Static Paradigms to Model the Dynamic Evolution of Relations.} In reality, relationships exhibit path-dependent and time-evolving characteristics~\cite{survey4kg2024,zhang2022knowledge}. For instance, the Thor-Loki dynamic shifted from brotherhood to enmity and later to reconciliation, while the Iron Man-Spider-Man bond evolved from a mentor-protégé dynamic to a quasi-paternal one (see \autoref{fig:dynamic_relation}). Accurately modeling such evolutionary trajectories is critical, a task for which existing static scoring functions are ill-equipped. While GNNs introduce structural context, they operate on fixed snapshots of the graph and are thus inherently static in their modeling approach~\cite{wang2021relational}. 

Recently, Large Language Models (LLMs) have been introduced to enhance semantic understanding in KGC~\cite{zhang2024making,tkde1,cheng2024LLM4KG}, as seen in models like KG-BERT \cite{yao2019kg} and MuKDC \cite{Li2024ijcaiLLM4KG}. Although these models offer improved context awareness, they still lack explicit mechanisms for modeling the evolutionary paths of relations and are often hampered by large parameter counts, low efficiency, and challenges in domain adaptation. Furthermore, approaches like MuKDC are also constrained by their strong reliance on meticulously designed prompts\cite{survey4kg2024,quintero2024integrating}.

To this end, this paper introduces the \textbf{Flow-Modulated Scoring (FMS)} framework, a novel approach that pioneers the unification of context-aware and dynamic evolution modeling for knowledge graph completion. The core idea of FMS is to conceptualize a relation as a conditional vector flow from a head entity $h$ to a tail entity $t$, where the trajectory of this flow is dynamically modulated by the local semantic context. Specifically, the FMS framework comprises two key modules:
1) Semantic Context Learning Module employs an energy function to identify and aggregate the top-$k$ most semantically relevant contextual paths, thereby constructing context-aware entity representations.
2) Conditional Flow Matching Module learns a conditional vector field that explicitly models the dynamic evolutionary trajectory of a relation from $h$ to $t$, enabling a path-sensitive representation of relational semantics.

By integrating these components, FMS dynamically adjusts the output of conventional scoring functions, facilitating a fine-grained modeling of relation semantics that is both context-sensitive and temporally aware. Extensive experiments demonstrate that FMS achieves state-of-the-art (SOTA) performance across multiple standard benchmarks while maintaining superior parameter efficiency.
The main contributions of this paper are summarized as follows:
\begin{itemize}
    \item \textbf{Flow-Modulated Scoring Framework:} We propose the first KGC framework that integrates context-aware and dynamic evolution modeling, overcoming the limitations of static scoring paradigms (\textcolor{red}{Section} \ref{sec3.2}).
    \item \textbf{Semantic Context Learning:} We design an energy-based top-$k$ context selection mechanism to efficiently capture critical semantic dependencies (\textcolor{red}{Section} \ref{sec3.2.1}).
    \item \textbf{Conditional Flow Matching:} We introduce a conditional vector flow mechanism to explicitly model the evolutionary paths of relations, enhancing the capacity for dynamic semantic representation (\textcolor{red}{Section} \ref{sec3.2.2}).
    \item We demonstrate the state-of-the-art performance and high parameter efficiency of FMS on multiple benchmarks, confirming its effectiveness in modeling complex relations (\textcolor{red}{Section} \ref{sec4} and \textcolor{red}{Section} \ref{sec5}).
\end{itemize}

\section{Related Work}

\subsection{Knowledge Graph Completion}

Knowledge Graphs provide structured information for downstream tasks such as recommender systems and semantic analysis \cite{tkde3,survey4kg2024,zhao2025kg4re}. Knowledge Graph Completion (KGC), and particularly Relation Prediction (RP), addresses the problem of KG incompleteness by predicting missing links \cite{tkde1,ibrahim2024survey,tkde5,10948338}.

The majority of KGC methods rely on embedding-based approaches, which assign vectors to entities and relations in a continuous space and are trained on observed facts~\cite{long2024kgdm,long2024fact,ge2024knowledge,cao2024knowledge}. For instance, early geometric and bilinear models, such as TransE~\cite{bordes2013translating} and ComplEx~\cite{trouillon2016complex}, laid the groundwork by defining simple yet effective scoring functions. To capture more intricate relational patterns, subsequent research has advanced into more expressive geometric spaces. RotaE~\cite{sun2019rotate}, for example, introduced relational rotations in complex space, a concept later enhanced by Rot-Pro~\cite{song2021rotpro} for greater fidelity. Other approaches have explored specialized geometries, such as HAKE's~\cite{zhang2020learning} use of polar coordinates for semantic hierarchies and BoxE's~\cite{Boxe2020} hyper-rectangle representations for logical conjunctions. In a parallel thread of research, recent work has focused not on geometry but on augmenting the reasoning process itself. AnKGE~\cite{AnKGE2023} exemplifies this direction by equipping base models with an analogical inference mechanism to leverage similar, known facts~\cite{tkde4}.

Despite their effectiveness in handling relational patterns, these models are fundamentally static. They rely on pre-computed, fixed embeddings, which renders the models context-agnostic; the representation of an entity or relation remains constant regardless of the specific factual query \cite{survey4kg2024}. This paradigm is inherently incapable of capturing the subtle nuances and contextual dependencies of relational meanings.
To overcome static embeddings, recent research has leveraged Large Language Models (LLMs) to inject contextual awareness \cite{tan2025paths,10948338}. These approaches either refine link prediction as a textual scoring task (e.g., KG-BERT~\cite{yao2019kg}, SimKGC~\cite{wang2022simkgc}) or utilize LLMs as generative knowledge sources to distill missing facts, as exemplified by MuKDC~\cite{Li2024ijcaiLLM4KG}. Despite their promise, these LLM-based methods are often hampered by their massive parameter counts, low computational efficiency, and significant challenges in domain adaptation \cite{10948338}.
Another direction to overcome static embeddings is through generative modeling. Recent works like FDM~\cite{long2024fact} and KGDM~\cite{Long_Zhuang_Li_Wei_Li_Wang_2024} employ diffusion models~\cite{ho2020denoising} to learn fact distributions. However, their generative process is typically conditioned solely on the local query triplet, thus failing to leverage the broader, multi-hop structural context crucial for robust link prediction.

The advent of GNN-based models represents an attempt to incorporate contextual information from the graph structure~\cite{chang2024path,tkde6}. Methods like PathCon~\cite{wang2021relational} enrich entity and relation representations by aggregating information from multi-hop paths, demonstrating the value of neighborhood context. To mitigate the high cost of explicit path extraction, recent work CBLiP~\cite{dutta2025replacing} instead uses connection-biased attention to implicitly capture structural information from the local subgraph. However, these approaches still operate on a static snapshot of the graph. While they encode structural context, they do not explicitly model the formation process or evolutionary trajectory of relations. Their aggregation mechanisms are typically fixed and cannot adapt to the dynamic and path-dependent characteristics inherent in how relationships form and evolve. In summary, prevailing methodologies exhibit an over-reliance on static modeling paradigms, making them ill-equipped to handle the dual challenges of context-sensitivity and dynamic evolution.

\subsection{Modeling Dynamic Processes with Conditional Flow Matching}

To address the core challenges of modeling dynamic relationship evolution, we draw on research in generative models concerning the learning of transformations and trajectories between data distributions. Although diffusion models~\cite{ho2020denoising,song2020score} have demonstrated formidable generative capabilities in domains such as computer vision~\cite{li2022srdiff} and sequence modeling~\cite{li2022diffusion,tashiro2021csdi}, their training and inference processes are often computationally expensive.

A more direct and efficient approach for learning transformation paths is offered by Continuous Normalizing Flows (CNFs)~\cite{chen2018neural}. CNFs learn an invertible mapping between a data distribution and a simple prior distribution by parameterizing the vector field of an Ordinary Differential Equation (ODE). Recently, Flow Matching (FM)~\cite{lipman2023flow, albergo2023stochastic, liu2022rectified,geng2025mean} has emerged as a highly efficient and stable technique for training CNFs. Unlike score-matching methods that learn the gradient of the data distribution's logarithm, FM learns the transformation by directly regressing the ODE's vector field. This simulation-free training approach significantly enhances model stability and efficiency. The advent of FM introduced new paradigms for CNF training and improved sample quality, and while early FM methods often assumed a Gaussian source distribution, this limitation has been addressed in subsequent work~\cite{chen2024flowgeom, pooladian2023multisample}.

Building on this foundation, Conditional Flow Matching (CFM)~\cite{tong2023improving,ma2024sit,gat2024discrete} offers a more generalized and powerful framework. CFM aims to learn probability paths conditioned on specific attributes, providing a unified, simulation-free training objective for models with arbitrary transport maps. Crucially, it neither requires the source distribution to be Gaussian nor necessitates the evaluation of its probability density. These properties—efficiency, stability, and flexible conditioning—make CFM the ideal theoretical tool for our objectives.
\textbf{In this work}, we innovatively apply the core mechanism of Conditional Flow Matching within KGC. Instead of using it for data generation, we leverage it to model the \textbf{conditional evolutionary path} of a relation from a head entity $h$ to a tail entity $t$. This enables us to explicitly model relations as trajectories, dynamically modulated by semantic context, thereby overcoming the static limitations of existing KGC methods.


\begin{figure*}[t]
    \centering
    \includegraphics[width=0.98\linewidth]{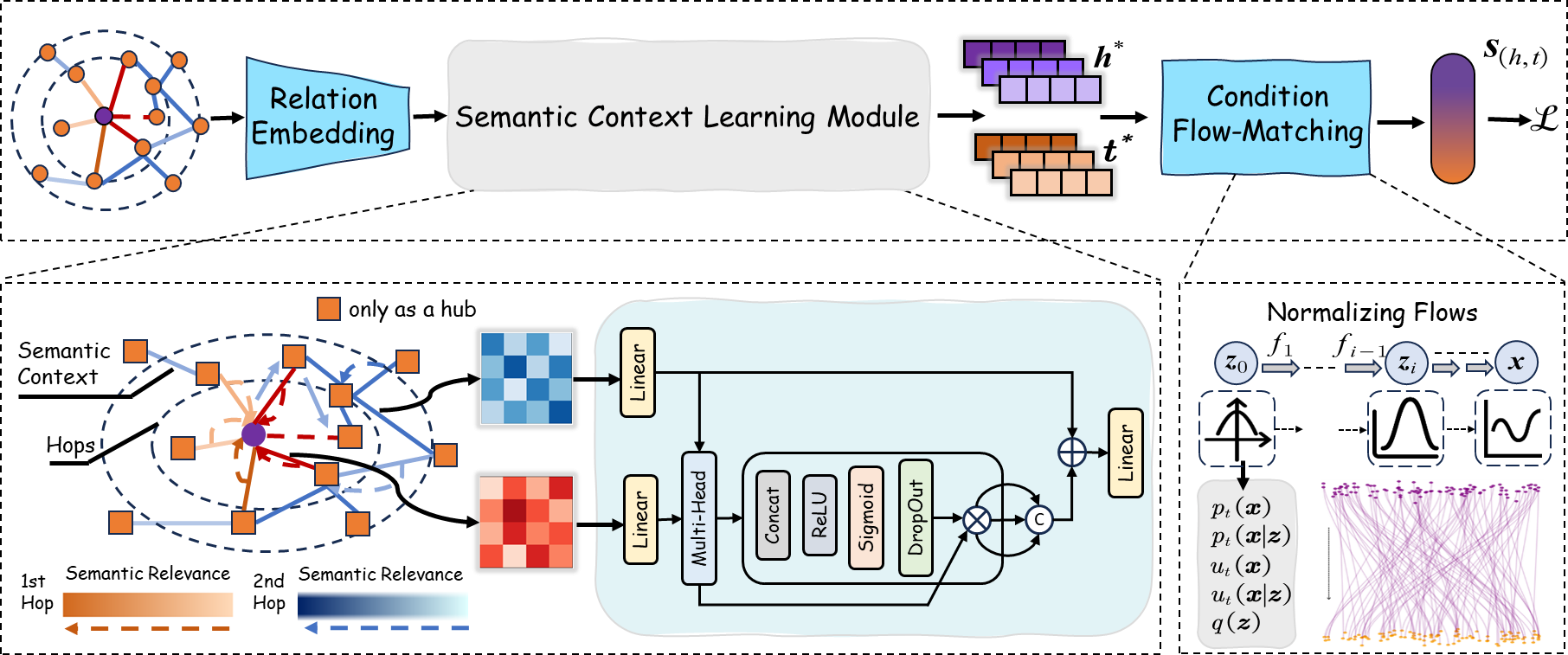}
    \caption{The architecture of the Flow-Modulated Scoring (FMS) framework. In Semantic Context Learning Module, nodes(square shapes) do not consider their own intrinsic node messages; rather, their role is to facilitate the propagation and aggregation of messages associated with their incident edges, which represent the relation semantics. Conditional Flow Matching Module learns a dynamic flow that modulates the static score to produce the final prediction, unifying static context with dynamic evolution.}
    \label{fig:architecture}
\end{figure*}

\section{Methodology}
\label{sec3}
Relational links arise from the interplay of two fundamental principles: the static semantic context of entities and the dynamic evolutionary process connecting them. In this section, we formalize this duality through a probabilistic decomposition of relation prediction (\textcolor{red}{Section} \ref{sec3.1}), which in turn motivates our two-component approach. Subsequently, we introduce the core components of the Flow-Modulated Scoring model (\textcolor{red}{Section} \ref{sec3.2}) and show how they are unified into a single, cohesive scoring function under a joint training objective (\textcolor{red}{Section} \ref{sec3.2.3}).

\subsection{Preliminaries}
\label{sec3.1}
Let \( \mathcal{G} = (\mathcal{V}, \mathcal{E}) \) be an instance of a knowledge graph, where \( \mathcal{V} \) is the set of entity nodes and \( \mathcal{E} \) is the set of edges, with each edge representing a relation \( r \in \mathcal{R} \)~\cite{zhang2022knowledge}. Our goal is to predict missing relations in the knowledge graph, i.e., given an entity pair \( (h, t) \), infer the relation \( r \) between them. 

Specifically, we aim to model the distribution of relation types given the entity pair \( (h, t) \), denoted as \( p(r|h, t) \), to capture the complex semantic context and dynamic nature of relations. According to Bayes' theorem, this distribution can be expressed as:
\begin{equation}
    \label{eq:bayes}
    p(r|h, t) \propto p(h, t|r) \cdot p(r),
\end{equation}
where \( p(r) \) is the prior distribution over relation types, serving as a regularization term. To further model the joint probability \( p(h, t|r) \), we decompose it into a symmetric form:
\begin{equation}
    \label{eq:decomposition}
p(h, t|r) = \frac{1}{2} \left( p(h|r) \cdot p(t|h, r) + p(t|r) \cdot p(h|t, r) \right).
\end{equation}

Equation~(\ref{eq:decomposition}) provides guidance for our modeling approach. The terms \( p(h|r) \) and \( p(t|r) \) measure the likelihood of entities \( h \) or \( t \) given the relation \( r \), reflecting the semantic context of the entities. The terms \( p(t|h, r) \) and \( p(h|t, r) \) capture the dynamic evolution from \( h \) to \( t \) or from \( t \) to \( h \) given the relation \( r \), reflecting the dynamic nature of relations. The subsequent sections detail how our proposed model instantiates these two factors and integrates them for Knowledge graph completion.

\subsection{Flow-Modulated Scoring Model}
\label{sec3.2}
The overall framework of the Flow-Modulated Scoring model encompasses \textbf{two core components:} Semantic Context Learning and Conditional Flow Matching. As illustrated in \autoref{fig:architecture}, the semantic context learning stage is designed to encode context-aware entity embeddings, while the conditional flow matching stage learns a dynamic flow from the head entity embedding to the tail entity embedding. This dynamic flow is then used to modulate a static score for the entity pair, with the ultimate goal of achieving comprehensive relation prediction. Next, we provide a detailed explanation of these components.

\subsubsection{Semantic Context Learning}
\label{sec3.2.1}

For a given triplet $(h, r, t)$ in a Knowledge Graph (KG), its semantic context offers valuable cues for identifying valid links~\cite{wang2021relational}. Traditional node-based message passing, as seen in GraphSAGE~\cite{hamilton2017inductive}, GAT~\cite{vaswani2017attention} and RED-GNN~\cite{zhang2022knowledge}, is applicable to general graphs. However, when applied to KGs, we observe that edges themselves possess rich features (i.e., relation embeddings $x_e$), and the number of relation types is significantly smaller than the number of entities. Consequently, modeling message passing directly between relations (edges) is not only more efficient but also provides more indicative semantic cues. As illustrated in \autoref{fig:iron_man_relations}, our reasoning about Iron Man's "ally" status is predicated on these relational attributes, rather than on the intrinsic properties of the entities themselves.

To this end, we propose a relational message passing scheme where edges update their states by aggregating information from neighboring edges. This approach treats the KG's relational structure as the primary computational graph. However, naively aggregating information from all neighboring edges can introduce noise and lead to information dilution or over-smoothing, especially when some neighbors are semantically irrelevant to the target relation.

To address this challenge, we introduce a \textit{\textbf{Top-K selection mechanism based on semantic similarity between edges.}} Specifically, when updating the state of a central target edge $e_c$, we first evaluate its semantic relevance to each neighboring edge $e_n$ in its local neighborhood $\mathcal{N}(e_c)$. This is achieved by projecting their current states, $s^{e_c}_i$ and $s^{e_n}_i$, into a shared latent semantic space using a learnable mapping function $g(\cdot)$~\cite{cao2024diffusione}. In this space, we compute their similarity score using an energy-model inspired scoring function:
\begin{equation}
    \text{Score}(e_c, e_n) = \exp(-\|g(s^{e_c}_i) - g(s^{e_n}_i)\|^2 / \tau),
    \label{eq:semantic_score}
\end{equation}
where $\| \cdot \|^2$ is the squared Euclidean distance and $\tau$ is a temperature hyperparameter. Edges with higher scores, deemed more semantically relevant to the central edge $e_c$, are selected to form the Top-K set $\mathcal{N}_K(e_c) \subseteq \mathcal{N}(e_c)$.

The states of these selected Top-K edges, $\{ s^{e_n}_i \}_{e_n \in \mathcal{N}_K(e_c)}$, are first mean-aggregated to form a consolidated neighborhood representation:
\begin{equation}
    \bar{s}^{\mathcal{N}_K}_i(e_c) = \text{Mean}\left( \{ s^{e_n}_i \}_{e_n \in \mathcal{N}_K(e_c)} \right).
    \label{eq:mean_agg_edges}
\end{equation}

This aggregated neighborhood representation $\bar{s}^{\mathcal{N}}_i(e_c)$, along with the central edge's own state $s^{e_c}_i$, is then processed by our specialized attention aggregator to compute the updated edge state for the next layer, $s^{e_c}_{i+1}$. This aggregator employs a multi-head attention mechanism that functions as a gating system. It transforms both $s^{e_c}_i$ and $\bar{s}^{\mathcal{N}}_i(e_c)$, calculates head-wise gating scores based on their interaction, applies these gates to the transformed neighborhood representation, and then combines this gated information with the transformed self-representation before a final projection. This process yields a semantically focused update:
\begin{equation}
    s^{e_c}_{i+1} = \text{AttentionAggregator}\left( s^{e_c}_i, \bar{s}^{\mathcal{N}_K}_i(e_c) \right).
    \label{eq:edge_state_update}
\end{equation}
The initial state of an edge is defined by its intrinsic feature embedding, $s^e_0 = x^e \in \mathbb{R}^{d_e}$. This iterative update is performed for a predefined number of layers. This process ultimately generates a contextual representation for the central entity, which encapsulates the relational information within its multi-hop neighborhood. The impact of the semantic scoring function and aggregator choice will be further discussed in our ablation studies (\autoref{sec4.4}).

In this manner, our model directly leverages the relational structure of the KG. Through the semantic-aware edge neighbor selection and the subsequent attention-based aggregation, the model accurately captures and propagates context information most relevant to the specific relation prediction task, performing reasoning directly in the relation space.Notably, this process avoids the substantial computational overhead of encoding entity embeddings by not leveraging any entity-specific features.

\subsubsection{Condition Flow-Matching Module} 
\label{sec3.2.2}
Prior research typically predicts static association scores directly utilizing entity embeddings. Although our Semantic Context Learning module is able to generate more informative entity representations, it is still insufficient to characterize the dynamics of relationships. \textbf{To address this limitation}, we have designed the Conditional Flow Matching Module. It constructs a conditional flow-based flow matching mechanism, incorporating the semantic context in which entities are situated as conditional variables into the relationship modeling process.

The core of this module is learning a dynamic vector field
\(\boldsymbol{v}_{\theta}(t,\boldsymbol{x})\) that depends on a pseudo-time variable \(t \in [0,1]\) and the current state {\( \boldsymbol{\mathit{x}} \in \mathbb{R}^{\scriptstyle \mathit{d}} \)}. The reason behind our proposal of the pseudo-time variable lies is the fact that the dynamics of relationships are inherently intertwined with the temporal dimension—evolution of relationships (in the real world it is mostly accompanied by the passage of time). However, the construction of traditional knowledge graph datasets has limitations: they do not explicitly incorporate temporal annotation information. Instead, such temporal attributes are latent in the overall distribution of massive triple data, making them difficult to be directly captured and utilized. To address this, we introduce $t$ as a bridge. It does not directly replicate physical time, but rather externalizes the originally concealed temporal properties through parameterization. This treatment makes the model to more accurately capture the intrinsic logic of relationship evolution over "time". This learned vector field characterizes the instantaneous direction and velocity along the evolution path from the head to the tail entity embedding.

Specifically, drawing upon the principles of Conditional Flow Matching~\cite{tong2023improving}, we employ a continuous flow to model the evolution from the head entity embedding \(\boldsymbol{\mathit{h}}^{*}\) to the tail entity embedding \(\boldsymbol{\mathit{t}}^{*}\)\footnote{The asterisk (*) is used here to distinguish entity representations from the pseudo-time variable \(t\). This notation will be maintained without repeated clarification.} This process can be formalized as:
\begin{equation}
\mathit{p}_{\mathit{t}}(\boldsymbol{\mathit{x}}) = \int \mathit{p}_{\mathit{t}}(\boldsymbol{\mathit{x}}|\boldsymbol{\mathit{z}}) \mathit{q}(\boldsymbol{\mathit{z}}) \mathit{d}\boldsymbol{\mathit{z}},
\label{(8)}
\end{equation}
where \(\mathit{p_t(\boldsymbol{\mathit{x}})}\) represents the marginal probability path, \(q(\boldsymbol{\mathit{z}})\) is the sampling distribution of the semantic context \(\boldsymbol{z}\), and the conditional probability path \(p_{t}(\boldsymbol{\mathit{x}}|\boldsymbol{\mathit{z}})\) denotes the probability path given \(\boldsymbol{z}\). Correspondingly, the marginal vector field \(u_t(\boldsymbol{\mathit{x}})\) that drives the evolution of \(p_{t}(\boldsymbol{\mathit{x}})\) can be expressed as an expectation of the conditional vector field \(u_{t}(\boldsymbol{\mathit{x}}|\boldsymbol{\mathit{z}})\) with respect to the prior distribution \(q(\boldsymbol{\mathit{z}})\), weighted by the posterior distribution \(p(\boldsymbol{\mathit{z}} | t, \boldsymbol{\mathit{x}})\):
\begin{equation}
u_t(\boldsymbol{\mathit{x}}) = E_{q(\boldsymbol{\mathit{z}})}\left[ u_t(\boldsymbol{\mathit{x}}|\boldsymbol{\mathit{z}})\frac{p_t(\boldsymbol{\mathit{x}}|\boldsymbol{\mathit{z}})}{p_t(\boldsymbol{\mathit{x}})} \right],
\label{(9)}
\end{equation}
where \(u_{t}(\boldsymbol{\mathit{x}}|\boldsymbol{\mathit{z}}) = \frac{d\boldsymbol{\mathit{x}}}{d t}\). Consequently, under the initial condition \(p_{t}(\boldsymbol{\mathit{x_0}})\), \(u_{t}(\boldsymbol{\mathit{x}})\) generates the probability flow path \(p_{t}(\boldsymbol{\mathit{x}})\). We formally establish this result as follows.
\begin{proof}
\label{proof1}
To prove that the vector field \(u_t(\boldsymbol{x})\) generates the probability path \(p_t(\boldsymbol{x})\), we must show that they satisfy the continuity equation: 
$
\frac{\partial p_t(\boldsymbol{x})}{\partial t} + \nabla_{\boldsymbol{x}} \cdot \left( u_t(\boldsymbol{x}) p_t(\boldsymbol{x}) \right) = 0.
$
We begin by taking the time derivative of the marginal probability path defined in Eq.~\ref{(8)}:
\begin{equation}
    \frac{\partial p_t(\boldsymbol{x})}{\partial t} = \frac{\partial}{\partial t} \int p_t(\boldsymbol{x}|\boldsymbol{z})q(\boldsymbol{z})\, d\boldsymbol{z} \\
    = \int \frac{\partial p_t(\boldsymbol{x}|\boldsymbol{z})}{\partial t} q(\boldsymbol{z}) \, d\boldsymbol{z},
\label{(22)}
\end{equation}
where the second of these equal signs is guaranteed by the Leibniz Rule. By definition, the conditional path \(p_t(\boldsymbol{x}|\boldsymbol{z})\) is generated by the conditional vector field \(u_t(\boldsymbol{x}|\boldsymbol{z})\). Therefore, it satisfies its own continuity equation:
\begin{equation}
    \frac{\partial p_t(\boldsymbol{x}|\boldsymbol{z})}{\partial t} = -\nabla_{\boldsymbol{x}} \cdot \left( u_t(\boldsymbol{x}|\boldsymbol{z}) p_t(\boldsymbol{x}|\boldsymbol{z}) \right).
\end{equation}
Substituting this into our expression gives:
\begin{align}
    \frac{\partial p_t(\boldsymbol{x})}{\partial t} &= \int -\nabla_{\boldsymbol{x}} \cdot \left( u_t(\boldsymbol{x}|\boldsymbol{z}) p_t(\boldsymbol{x}|\boldsymbol{z}) \right) q(\boldsymbol{z}) \, d\boldsymbol{z} \\
    &= -\nabla_{\boldsymbol{x}} \cdot \int u_t(\boldsymbol{x}|\boldsymbol{z}) p_t(\boldsymbol{x}|\boldsymbol{z}) q(\boldsymbol{z}) \, d\boldsymbol{z}.
\end{align}
In the second step, we moved the divergence operator \(\nabla_{\boldsymbol{x}} \cdot\) outside the integral, as the integration is with respect to \(\boldsymbol{z}\).
Now, let's examine the definition of the marginal vector field \(u_t(\boldsymbol{x})\) from Eq.~\ref{(9)}. By multiplying both sides by \(p_t(\boldsymbol{x})\), we get:
\begin{equation}
    u_t(\boldsymbol{x}) p_t(\boldsymbol{x}) = \int u_t(\boldsymbol{x}|\boldsymbol{z}) p_t(\boldsymbol{x}|\boldsymbol{z}) q(\boldsymbol{z}) \, d\boldsymbol{z}.
\end{equation}
The right-hand side is precisely the term inside the integral in our derivation. Substituting this back, we obtain:
\begin{equation}
    \frac{\partial p_t(\boldsymbol{x})}{\partial t} = -\nabla_{\boldsymbol{x}} \cdot \left( u_t(\boldsymbol{x}) p_t(\boldsymbol{x}) \right).
\end{equation}
Rearranging the terms yields the continuity equation:
\begin{equation}
    \frac{\partial p_t(\boldsymbol{x})}{\partial t} + \nabla_{\boldsymbol{x}} \cdot \left( u_t(\boldsymbol{x}) p_t(\boldsymbol{x}) \right) = 0.
\end{equation}
Since the marginal probability path \(p_t(\boldsymbol{x})\) and the marginal vector field \(u_t(\boldsymbol{x})\) satisfy the continuity equation, and \(p_t(\boldsymbol{x})\) satisfies the initial condition \(p_0(\boldsymbol{x})\) by construction, it follows that \(u_t(\boldsymbol{x})\) is indeed the generator of the probability path \(p_t(\boldsymbol{x})\).
\end{proof}

Although we aim to recover \(u_{t}(\boldsymbol{\mathit{x}})\), its direct computation is generally difficult. Considering the relative tractability of the conditional vector field \(p_t(\boldsymbol{\mathit{x}}|\boldsymbol{\mathit{z}})\), we adopt an indirect learning strategy, and the model is enabled to directly learn an approximation of the vector field \(u_{t}(\boldsymbol{\mathit{x}}|\boldsymbol{\mathit{z}})\). We assume \(v_{\theta}:\mbox{\small\([0,1]\times\mathbb{R}^{d}\to\mathbb{R}^{d}\)}\) is a time-dependent vector field parameterized by a neural network with parameters \(\theta\). We define the following objective function:
\begin{equation}
\mathcal{L}_{cfm}(\theta)=\mathbb{E}_{t,q(\boldsymbol{\mathit{z}}),p_{t}(\boldsymbol{\mathit{x}}\mid\boldsymbol{\mathit{z}})}\left\|v_{\theta}(t,\boldsymbol{\mathit{x}}) - u_{t}(\boldsymbol{\mathit{x}}|\boldsymbol{\mathit{z}})\right\|_{2}^{2}.
\end{equation}
Based on this, the model learns \( v_{\theta}(t,\boldsymbol{x}) \). It can serve as a dynamic adjustment factor to modulate the static base score assigned to the pair of corresponding entities. 
 
Then, we draw on the optimal transport theory~\cite{Schrijver2003}, and aims to model the least-cost evolution path from head entity distribution to tail entity distribution. \(q(\boldsymbol{z}\)) defined as:
\begin{equation}
\\ q(\boldsymbol{z}) = \pi(\boldsymbol{z})=\underset{\pi \in \Pi(\rho_H, \rho_T)}{\text{argmin}} \int_{\mathbb{R}^d \times \mathbb{R}^d} \left\lVert \boldsymbol{x} - \boldsymbol{y} \right\rVert_2^2 \, d\pi(\boldsymbol{x}, \boldsymbol{y}),
\end{equation}
where \(\rho_H\) and \(\rho_T\) represent the marginal distributions of the head and tail entity embeddings, respectively, and \(\Pi(\rho_H, \rho_T)\) is the set of all joint probability distributions. This design aims to guide the model to learn a flow field aligned with the static optimal transport path. The conditional probability path and its corresponding conditional vector field are defined as in the paper~\cite{Schrijver2003}:
\begin{equation}
p_{t}(\boldsymbol{x}|\boldsymbol{z}) = \mathcal{N}(\boldsymbol{x}\mid(1 - t)\boldsymbol{h}^* + t\boldsymbol{t}^*, \sigma^2 \mathbf{I}),
\label{(12)}
\end{equation}
\begin{equation}
u_{t}(\boldsymbol{x}|\boldsymbol{z}) = \boldsymbol{t}^* - \boldsymbol{h}^*,
\label{(13)}
\end{equation}
where \(p_{t}(\boldsymbol{x}|\boldsymbol{z})\) is a Gaussian distribution, and the noise standard deviation  \(\sigma\) controls the stochasticity of the probability path. In particular, as $\sigma \to 0$, The marginal vector field can be regarded as a generative model of $\boldsymbol{h}^*$. Proof is given in the \hyperref[thm:example]{Theorem}. A moderate value of \(\sigma\) can enhance model robustness. Through CSM, model can learn more direct evolution paths between entities.

\begin{theorem}\label{thm:example}
The marginal $p_t$ corresponding to $q(z) = q(\boldsymbol{h}^*)q(\boldsymbol{t}^*)$ and the $p_t(x|z)$, $u_t(x|z)$ in (\ref{(12)}) and (\ref{(13)}) has boundary conditions $p_1 = q_1 \ast \mathcal{N}(x \mid 0, \sigma^2)$ and $p_0 = q_0 \ast \mathcal{N}(x \mid 0, \sigma^2)$, where $\ast$ denotes the convolution operator.
\end{theorem}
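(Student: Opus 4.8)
The plan is to compute the marginal $p_t(\boldsymbol{x}) = \int p_t(\boldsymbol{x}\mid\boldsymbol{z})\, q(\boldsymbol{z})\, d\boldsymbol{z}$ directly at the two endpoints $t=0$ and $t=1$, using the explicit Gaussian form of the conditional path in~(\ref{(12)}) together with the product structure $q(\boldsymbol{z}) = q(\boldsymbol{h}^*)\,q(\boldsymbol{t}^*)$ with $\boldsymbol{z} = (\boldsymbol{h}^*,\boldsymbol{t}^*)$. The crucial observation is that at an endpoint the conditional mean in~(\ref{(12)}) collapses onto a single one of the two latent variables: at $t=0$ we have $p_0(\boldsymbol{x}\mid\boldsymbol{z}) = \mathcal{N}(\boldsymbol{x}\mid\boldsymbol{h}^*,\sigma^2\mathbf{I})$, which depends on $\boldsymbol{z}$ only through $\boldsymbol{h}^*$, and symmetrically $p_1(\boldsymbol{x}\mid\boldsymbol{z}) = \mathcal{N}(\boldsymbol{x}\mid\boldsymbol{t}^*,\sigma^2\mathbf{I})$ depends only on $\boldsymbol{t}^*$.

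For the case $t=0$, I would substitute into the definition of the marginal to get $p_0(\boldsymbol{x}) = \int \mathcal{N}(\boldsymbol{x}\mid\boldsymbol{h}^*,\sigma^2\mathbf{I})\, q(\boldsymbol{h}^*)\, q(\boldsymbol{t}^*)\, d\boldsymbol{h}^*\, d\boldsymbol{t}^*$. By Fubini's theorem (the integrand is nonnegative and $q$ is a probability density) the double integral factors, and since the integrand does not involve $\boldsymbol{t}^*$ the inner integral contributes $\int q(\boldsymbol{t}^*)\, d\boldsymbol{t}^* = 1$, leaving $p_0(\boldsymbol{x}) = \int \mathcal{N}(\boldsymbol{x}\mid\boldsymbol{h}^*,\sigma^2\mathbf{I})\, q(\boldsymbol{h}^*)\, d\boldsymbol{h}^*$. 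Using translation invariance of the Gaussian density, $\mathcal{N}(\boldsymbol{x}\mid\boldsymbol{h}^*,\sigma^2\mathbf{I}) = \mathcal{N}(\boldsymbol{x}-\boldsymbol{h}^*\mid\mathbf{0},\sigma^2\mathbf{I})$, this integral is exactly the convolution $(q_0 \ast \mathcal{N}(\cdot\mid\mathbf{0},\sigma^2\mathbf{I}))(\boldsymbol{x})$ with $q_0 := q(\boldsymbol{h}^*)$. The case $t=1$ is the mirror image: now $\boldsymbol{h}^*$ is the inactive variable, integrating it out gives a factor $1$, and the same translation-invariance step yields $p_1 = q_1 \ast \mathcal{N}(\cdot\mid\mathbf{0},\sigma^2\mathbf{I})$ with $q_1 := q(\boldsymbol{t}^*)$.

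This argument has no genuine obstacle; the only point deserving explicit comment is that the independence assumption $q(\boldsymbol{z}) = q(\boldsymbol{h}^*)q(\boldsymbol{t}^*)$ is precisely what lets the inactive endpoint variable be marginalized to $1$ — without it, the boundary marginals would be the mixtures $\int\mathcal{N}(\boldsymbol{x}\mid\boldsymbol{h}^*,\sigma^2\mathbf{I})\,q(\boldsymbol{h}^*,\boldsymbol{t}^*)\,d\boldsymbol{t}^*$, which need not collapse to a clean convolution of a single marginal. I would also note that the constant conditional vector field~(\ref{(13)}) is automatically consistent with the straight-line interpolant $(1-t)\boldsymbol{h}^* + t\boldsymbol{t}^*$ appearing in~(\ref{(12)}), so no separate compatibility check between~(\ref{(12)}) and~(\ref{(13)}) is needed for this statement. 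Finally, I would close by observing that letting $\sigma\to 0$ makes $\mathcal{N}(\cdot\mid\mathbf{0},\sigma^2\mathbf{I})$ converge weakly to the Dirac mass at $\mathbf{0}$, so that $p_0\to q_0$ and $p_1\to q_1$, recovering the remark preceding the theorem that the induced flow acts as a generative model transporting $q(\boldsymbol{h}^*)$ to $q(\boldsymbol{t}^*)$.
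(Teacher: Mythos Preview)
Your proposal is correct and follows essentially the same route as the paper: write the marginal as the double integral over $(\boldsymbol{h}^*,\boldsymbol{t}^*)$, specialize the conditional mean $(1-t)\boldsymbol{h}^* + t\boldsymbol{t}^*$ at $t\in\{0,1\}$, use the product form of $q(\boldsymbol{z})$ to integrate out the inactive variable, and identify the remaining integral as a Gaussian convolution. Your additional remarks on Fubini, the necessity of the independence assumption, and the $\sigma\to 0$ limit are all sound and go slightly beyond what the paper spells out, but the core argument is the same.
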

\begin{proof}
We start with (\ref{(8)}) to show the result. We note that \( q(z) = q(\boldsymbol{h}^*, \boldsymbol{t}^*)) = q(\boldsymbol{h}^*)q(\boldsymbol{t}^*) \), so :

\[
\begin{aligned}
p_t(x) &= \int p_t(x \mid z) q(z) \, dz \\
&= \int \mathcal{N}\bigl(x \mid (1 - t)\boldsymbol{h}^* + t\boldsymbol{t}^*, \sigma^2\boldsymbol{I}\bigr) q\bigl((\boldsymbol{h}^*, \boldsymbol{t}^*)\bigr) d(\boldsymbol{h}^*, \boldsymbol{t}^*) \\
&= \iint  \mathcal{N}\bigl(x \mid (1 - t)\boldsymbol{h}^* + t\boldsymbol{t}^*, \sigma^2\boldsymbol{I}\bigr) q(\boldsymbol{h}^*) q(\boldsymbol{t}^*) \, d\boldsymbol{h}^* d\boldsymbol{t}^*
\end{aligned}
\]

evaluated at \( i = 0, 1 \) respectively. Therefore, at \( t = 0 \),

\[
\begin{aligned}
p_0(x) &= \iint \mathcal{N}\bigl(x \mid\boldsymbol{h}^*, \sigma^2\bigr) q(\boldsymbol{h}^*) q(\boldsymbol{t}^*) \, d\boldsymbol{h}^* d\boldsymbol{t}^* \\
&= \int \mathcal{N}\bigl(x \mid\boldsymbol{h}^*, \sigma^2\bigr) q(\boldsymbol{h}^*) \, d\boldsymbol{h}^* \\
&= q(\boldsymbol{h}^*) * \mathcal{N}\bigl(x \mid 0, \sigma^2\bigr).
\end{aligned}
\]

This is also true for \( t = 1 \).
\end{proof}


\subsubsection{Overall Scoring Process and Joint Training}
\label{sec3.2.3}
Having detailed the Semantic Context Learning module and the Conditional Flow-Matching module for modeling dynamic evolution, this section will now describe how they are integrated into a unified scoring and training framework. The process unfolds in two sequential steps to produce a final, context-sensitive, and dynamically-aware score.

Firstly, we utilize the Semantic Context Learning module to compute the final messages $m_{j-1}^h$ and $m_{j-1}^t$ for the head entity $h$ and the tail entity $t$. These messages summarize their respective consistent contextual semantics.
Subsequently, we represent the static score for the entity pair $(h,t)$ as follows:
\begin{equation}
    s_{(h,t)} = \sigma \left( \left[ m_{j-1}^h, m_{j-1}^t \right] \cdot W_{j-1} + b_{j-1} \right).
    \label{eq:context_representation}
\end{equation}
Note that Equation~\eqref{eq:context_representation} should only take the messages of $h$ and $t$ as input, without including their connecting edge $r$, since the ground-truth relation $r$ is treated as unobserved during the training stage.

Secondly, we employ the Condition Flow-Matching Module. Given the contextual condition $\boldsymbol{z} = (\boldsymbol{h}^*, \boldsymbol{t}^*) = (m_{j-1}^h, m_{j-1}^t)$, where the first symbol representation used here for the CFM module is for the sake of convenience in representation, this module predicts a dynamic vector field \(\boldsymbol{v}_{\theta}(t,\boldsymbol{x})\). This vector field characterizes the instantaneous velocity and direction of the evolution from an initial state $\boldsymbol{h}^*$ (i.e., $\boldsymbol{x}$ at $t=0$) to a target state $\boldsymbol{t}^*$ (i.e., $\boldsymbol{x}$ at $t=1$). We then use this vector field to dynamically modulate the basic static score of the entity pair as follows:
\begin{equation}
    s_{(h,t)} = s_{(h,t)} \odot v_{\theta}(t, \boldsymbol{x}).
\label{eq:dynamiclly_score}
\end{equation}

Given the flow-modulated score of the entity pair $s_{(h,t)}$, we can convert the score into a probability distribution over relations by applying softmax, which then allows us to predict the relations:
\begin{equation}
    p(r|h,t) = \text{SoftMax} \left( s_{(h,t)}\right).
\end{equation}

The primary objective for relation prediction is to minimize the cross-entropy loss between the predicted probabilities and the ground-truth relations over the training triplets $\mathcal{D}$. This is formulated as minimizing the negative log-likelihood of the true relations:
\begin{equation}
    \mathcal{L}_\textit{{\text{pred}}}(\Phi_S, \theta) = -\sum_{(h,r,t) \in \mathcal{D}} \log(p(r|h,t)),
    \label{eq:prediction_loss}
\end{equation}
where $\Phi_S$ represents all learnable parameters of the Semantic Context Learning module and the static scoring Linear.

To train the entire model effectively, we adopt a joint training strategy. The overall loss function $\mathcal{L}$ combines the relation prediction loss with the Condition Flow-Matching loss. The Condition Flow-Matching loss ensures that the learned vector field \(\boldsymbol{v}_{\theta}(t,\boldsymbol{x})\) accurately approximates the desired conditional dynamics.
The final training objective is to minimize the weighted sum of these losses:
\begin{equation}
    \mathcal{L}(\Phi_S, \theta) = \mathcal{L}_{\textit{\text{pred}}}(\Phi_S, \theta) + \lambda \mathcal{L}_{cfm}(\theta),
    \label{eq:total_loss}
\end{equation}
where $\lambda$ is a hyperparameter that balances the contribution of the two loss terms. 
The training process of FMS are illustrated in \hyperref[alg:cfm]{Algorithm \ref{alg:cfm}}.

It is worth noticing that the initial context representation derived from $m^h_{j-1}(\boldsymbol{h}^*)$ and $m^t_{j-1}(\boldsymbol{t}^*)$ plays a dual role: it directly contributes to the base score for relation prediction, and it also serves as the condition \(\boldsymbol{z}\) for the flow-matching module, thereby influencing the dynamic modulation factor.

\begin{algorithm}[t]
\caption{FMS Training Process}
\label{alg:cfm}
\begin{algorithmic}[1]
\STATE \textbf{Input:} Batch data \((h, t)\) and \(r\), where \((h, t)\) are entity pairs, \(r\) are relation labels.
\STATE \textbf{Parameters:} Relation embeddings, Semantic Context Learning Module, MLP, Linear.
\smallskip
\REPEAT
    \STATE \(\textbf{\textit{h}}^* = \text{Semantic Context Learning Module}(h)\);
    \STATE \(\textbf{\textit{t}}^* = \text{Semantic Context Learning Module}(t)\);
    \STATE \(t, x_t, u_t \sim \text{Condition Flow-Matching Module}(\textbf{\textit{h}}^*, \textbf{\textit{t}}^*)\);
    \STATE \(v_{\theta}(t, \boldsymbol{x}) = \text{MLP}([x_t, t])\);
    \STATE \(s_{(h,t)} = \text{Linear}([\textbf{\textit{h}}^*, \textbf{\textit{t}}^*])\);
    \STATE \(s_{(h,t)} = s_{(h,t)} \odot v_{\theta}(t, \boldsymbol{\mathit{x}})\);
    \STATE \(\mathcal{L}_{cfm}(\theta) = \mathbb{E}_{t, q(\boldsymbol{\mathit{z}}), p_{t}(\boldsymbol{\mathit{x}} \mid \boldsymbol{\mathit{z}})} \left\| v_{\theta}(t, \boldsymbol{\mathit{x}}) - u_{t}(\boldsymbol{\mathit{x}}| \boldsymbol{\mathit{z}}) \right\|_{2}^{2}\);
    \STATE \(\mathcal{L}_{pred}(\Phi_S, \theta) = -\sum_{(h,r,t) \in \mathcal{D}} \log(p(r|h,t))\);
    \STATE \(\mathcal{L}(\Phi_S, \theta) = \mathcal{L}_{pred}(\Phi_S, \theta) + \lambda \mathcal{L}_{cfm}(\theta)\);
    \STATE Update model parameters using gradient descent on \(\mathcal{L}\).
\UNTIL{converged}
\end{algorithmic}
\end{algorithm}

To systematically validate the proposed Flow-Modulated Scoring (FMS) framework, our experimental evaluation is structured across two subsequent sections, each addressing a critical and complementary task within Knowledge Graph Completion (KGC).

\textcolor{red}{Section} \ref{sec4} is dedicated to Relation Prediction, the primary focus of this work. This task is not only pivotal for enabling fine-grained reasoning in knowledge graphs but also serves as the principal validation for the core design philosophy of FMS—its ability to capture the contextual dependency and dynamic evolutionary nature of relational semantics. The objective of this chapter is therefore to demonstrate the superior performance and technical advancements of FMS through a rigorous and comprehensive comparison against a wide array of state-of-the-art models.

Building upon this foundation, \textcolor{red}{Section} \ref{sec5} extends our assessment to the more fundamental and canonical task of Entity Prediction. While the FMS framework was conceived to address the profound complexities of relation modeling, its powerful representation learning and reasoning capabilities are hypothesized to possess broader applicability. Consequently, this chapter aims to verify the versatility and generalization power of the FMS framework. By adapting the model for entity prediction, we seek to establish that FMS is not merely a specialized solution for a single task but rather a versatile framework capable of empowering a spectrum of KGC challenges. This will comprehensively showcase its potential and practical utility as a novel paradigm for knowledge graph completion.

\section{Relation Prediction Experiment}
\label{sec4}
To evaluate the effectiveness of our FMS, we have designed a series of experiments to address the following research questions:
\begin{itemize}
    \item \textbf{RQ1:} How does the performance of FMS compare to a diverse range of mainstream prediction models?
    \item \textbf{RQ2:} What distinct contributions do the key components of FMS offer to the overall performance? Additionally, how does the model's performance adapt and respond to variations in hyperparameter settings?
    \item \textbf{RQ3:} How does FMS perform in terms of its number of parameters?
    \item \textbf{RQ4:} How does FMS modulate the static scores of triplets to capture rich contextual information shaping relation semantics?
    \item \textbf{RQ5:} How does FMS modulate the static scores of triplets to capture the dynamic evolution of relations?
\end{itemize}

\subsection{Transductive Experiment  Settings}
\label{sec4.1}
\subsubsection{Datasets}
We conduct experiments on six knowledge graph datasets:
(i) FB15K~\cite{toutanova-chen-2015-observed}, derived from Freebase, a large-scale KG of general human knowledge;
(ii) FB15K-237~\cite{toutanova-chen-2015-observed}, a subset of FB15K where inverse relations are removed;
(iii) WN18~\cite{dettmers2018convolutional}, containing conceptual-semantic and lexical relations among English words from WordNet;
(iv) WN18RR~\cite{dettmers2018convolutional}, a subset of WN18 where inverse relations are removed;
(v) NELL995~\cite{xiong2017deeppath}, extracted from the 995th iteration of the NELL system, containing general knowledge;
(vi) DDB14~\cite{wang2021relational}, collected from Disease Database, a medical database containing terminologies, concepts (diseases, symptoms, drugs), and their relationships.
The statistics above are summarized in \autoref{tab:all_dataset_stats_final}.

\begin{table}[t]
  \centering
    \caption{Statistics of all datasets. $\mathbb{E}[d]$ and $\mathrm{Var}[d]$ are mean and variance of the node degree distribution, respectively.}
    \vskip -0.1in
  \label{tab:all_dataset_stats_final} 
  \renewcommand{\arraystretch}{1.1} 
  \setlength{\tabcolsep}{3.5pt} 
  \begin{tabular}{c | c c c c c c c}
    \toprule
    \textbf{Datasets} & \textbf{Nodes} & \textbf{Relations} & \textbf{Train} & \textbf{Val} & \textbf{Test} & \textbf{$\mathbb{E}[d]$} & \textbf{$\mathrm{Var}[d]$} \\
    \midrule
    FB15K       & 15.0k & 1.3k & 483.1k & 50.0k & 59.1k & 64.6 & 32.4k \\
    FB15K-237   & 14.5k & 237  & 272.1k & 17.5k & 20.5k & 37.4 & 12.3k \\
    WN18        & 40.9k & 18   & 141.4k & 5.0k  & 5.0k  & 6.9  & 236.4 \\
    WN18RR      & 40.9k & 11   & 86.8k  & 3.0k  & 3.1k  & 4.2  & 64.3 \\
    NELL995     & 63.9k & 198  & 137.5k & 5.0k  & 5.0k  & 4.3  & 750.6 \\
    DDB14       & 9.2k  & 14   & 36.6k  & 4.0k  & 4.0k  & 7.9  & 978.8 \\
    \bottomrule
  \end{tabular}
\vskip -0.1in
\end{table}

\subsubsection{Baselines}
 Based on the concept of Knowledge Graph Completion, we categorize all baseline methods into four groups: 
 \textbf{(i) Emb-based:} TransE~\cite{bordes2013translating}, ComplEx~\cite{trouillon2016complex}, DistMult~\cite{yang2014embedding}, RotatE~\cite{sun2019rotate}, SimplE~\cite{kazemi2018simple}, QuatE~\cite{zhang2019quaternion}; 
 \textbf{(ii) Rule-based:} DRUM~\cite{sadeghian2019drum}, and PTransE~\cite{peng2022path}; 
 \textbf{(iii) GNN-Based:} 
 R-GCN~\cite{schlichtkrull2018modeling}, Pathcon~\cite{wang2021relational}, LASS~\cite{shen2022joint} and CBLiP~\cite{dutta2025replacing}; 
\textbf{(iv) LLM-Based:} KG-BERT~\cite{yao2019kg}, KGE-BERT~\cite{nadkarni2021scientific}, and GilBERT~\cite{nassiri2022knowledge}

\subsubsection{Evaluation Metrics}
To ensure a fair comparison, our training and evaluation setup follows that of PathCon~\cite{wang2021relational}\footnote{ \url{https://github.com/hyren/PathCon}}. 
For each test triplet $(h, r, t)$, we formulate a query $(h, ?, t)$\footnote{Some related work formulates this as predicting missing tail (or head) entities given a head (or tail) entity and a relation. These two problems (entity prediction and relation prediction) are reducible to each other. For instance, a relation prediction model $\Phi(\cdot | h, t)$ can be converted to a tail prediction model $\Gamma(\cdot | h, r) = \operatorname{SoftMax}_t (\Phi(r | h, t))$, and vice versa. Given this equivalence, this work focuses on relation prediction. We also discuss the entity prediction task in \autoref{sec5}.} where the model's objective is to predict the correct relation $r$. 
The model is required to rank the ground-truth relation $r$ against all other relations in the knowledge graph, which serve as negative samples. 
For instance, a negative candidate would be $(h, r', t)$, where $r' \neq r$. 
Analogous to link prediction, we report Mean Reciprocal Rank (MRR) and Hits@N under the filtered setting, which removes any other known true relations between the entity pair $(h, t)$ from the candidate list before ranking. 
Higher scores for these metrics indicate better model performance.

\begin{table}[t] 
    \centering
      \renewcommand{\arraystretch}{1.2} 
  \setlength{\tabcolsep}{15pt} 
    \caption{Best hyper-parameter configurations for FMS. The order for dataset-specific values is: FB15K, FB15K-237, WN18, WN18RR, NL995, DDB14.}
    \vskip -0.1in
    \label{tab:hyperparams}
    \footnotesize 
    \begin{tabular}{@{}lc@{}} 
    \toprule
    \textbf{Hyper-parameter} & \textbf{Value / Setting} \\
    \midrule
    $\lambda$               &1.2 \\
    Epoch                   & 20 \\
    Dimension (dim)         & 64 \\
    L2 Regularization       & $1 \times 10^{-7}$ \\
    Learning Rate           & $5 \times 10^{-3}$ \\
    Neighbor Aggregation    & Attention \\
    Temperature             & 0.95 \\
    \midrule
    \multicolumn{2}{@{}l}{\textit{Dataset-specific Parameters}} \\
    Batch Size              & 128 / 128 / 128 / 128 / 128 / 64 \\
    Context Hops            & 2 / 2 / 3 / 3 / 2 / 2 \\
    Neighbor Samples        & 16 / 16 / 8 / 8 / 8 / 8 \\
    Num. Top-K              & 10 / 10 / 4 / 4 / 3 / 3 \\
    Num. Attention Heads    & 4 / 4 / 4 / 4 / 4 / 4 \\
    \bottomrule
    \end{tabular}
\vskip -0.1in
\end{table}

\subsubsection{Implementation Details}

All experiments are conducted on two NVIDIA 4090 GPUs. Our investigations revealed that FMS's performance is highly sensitive to certain hyperparameters, particularly the number of context hops and the Top-K value, which depend on dataset characteristics. Consequently, to achieve optimal performance, we conducted a comprehensive hyperparameter search independently for each dataset across the following pre-defined spaces:
\begin{itemize}
    \item Dimension of hidden states: \{16, 32, 64, 128\};
    \item Weight of L2 loss term: \{$10^{-8}$, $10^{-7}$, $10^{-6}$, $10^{-5}$\};
    \item Learning rate: \{0.0005, 0.005, 0.01, 0.05, 0.1\};
    \item Number of context hops: \{1, 2, 3, 4\};
    \item Number of neighbor samples: \{8, 16, 24, 32\};
    \item Number of Top-K: \{3, 4, 5, 8, 10\}.
\end{itemize}
The best-performing configurations derived from this search are detailed in \autoref{tab:hyperparams}. For result stability, each experiment was repeated three times, and we report the average performance.  Our implementation is open-sourced and available on GitHub to support reproducibility.\footnote{\url{https://github.com/yuanwuyuan9/FMS}} 

\subsection{Inductive Experiment Settings}
\label{sec4.2}
\subsubsection{Datasets}
Following ~\cite{zhang2022knowledge}, we utilize a total of twelve subsets,
encompassing four distinct versions each, which are derived from
established datasets: WN18RR, FB15k237, and NELL-995. Each subset is uniquely characterized by a disparate split between the training and test sets, ensuring a robust and comprehensive evaluation schema. For a detailed discourse on the specific splits and a nuanced
presentation of the statistical attributes of each subset, one can refer
to the comprehensive descriptions available in ~\cite{zhang2022knowledge}.

\subsubsection{Baselines}
For inductive learning, traditional Embedding-based methods are not suitable for the task due to their inability to generalize to unseen entities, we choose the following two classes of classical and effective models: 
\textbf{(i) Rule-based:} RuleN~\cite{meilicke2018fine}, NeuralLP~\cite{yang2017differentiable}, DRUM~\cite{sadeghian2019drum};
\textbf{(ii) GNN-based:} GraIL~\cite{teru2020inductive}, PathCon~\cite{wang2021relational}.

\subsubsection{Evaluation Metrics}
The setup and parameter selection are consistent with the transduction experiments.

\begin{table*}[t]
  \centering
      \caption{Transductive performance comparison on all six datasets. The results here are expressed as percentages. The best results are shown in bold while the second-best results are shown in the underline.}
      \vskip -0.1in
  \label{tab:main_results_final_formatted}
  \setlength{\tabcolsep}{2.5pt} 
  \renewcommand{\arraystretch}{1.1}
  \begin{tabular}{c|c|ccc|ccc|ccc|ccc|ccc|ccc}
    \toprule
    \multirow{2}{*}{\textbf{Type}} & \multirow{2}{*}{\textbf{Model}} & \multicolumn{3}{c|}{\textbf{FB15K}} & \multicolumn{3}{c|}{\textbf{FB15K-237}} & \multicolumn{3}{c|}{\textbf{WN18}} & \multicolumn{3}{c|}{\textbf{WN18RR}} & \multicolumn{3}{c|}{\textbf{NELL995}} & \multicolumn{3}{c}{\textbf{DDB14}}\\
    \cmidrule(lr){3-5} \cmidrule(lr){6-8} \cmidrule(lr){9-11} \cmidrule(lr){12-14} \cmidrule(lr){15-17} \cmidrule(lr){18-20}
                                & & \textbf{MRR} & \textbf{H@1} & \textbf{H@3} & \textbf{MRR} & \textbf{H@1} & \textbf{H@3} & \textbf{MRR} & \textbf{H@1} & \textbf{H@3} & \textbf{MRR} & \textbf{H@1} & \textbf{H@3} & \textbf{MRR} & \textbf{H@1} & \textbf{H@3} & \textbf{MRR} & \textbf{H@1} & \textbf{H@3} \\
    \midrule
    \multicolumn{1}{c|}{\multirow{6}{*}{\centering \textbf{Emb-based}}} & TransE                 & 96.2 & 94.0 & 98.2 & 96.6 & 94.6 & 98.4 & 97.1 & 95.5 & 98.4 & 78.4 & 66.9 & 87.0 & 84.1 & 78.1 & 88.9 & 96.6 & 94.8 & 98.0 \\
     & ComplEx                   & 90.1 & 84.4 & 95.2 & 92.4 & 87.9 & 97.0 & 98.5 & 97.9 & 99.1 & 84.0 & 77.7 & 88.0 & 70.3 & 62.5 & 76.5 & 95.3 & 93.1 & 96.8 \\
     & DistMult                  & 66.1 & 43.9 & 86.8 & 87.5 & 80.6 & 93.6 & 78.6 & 58.4 & 98.7 & 84.7 & 78.7 & 89.1 & 63.4 & 52.4 & 72.0 & 92.7 & 88.6 & 96.1 \\
     & RotatE                  & 97.9 & 96.7 & 98.6 & 97.0 & 95.1 & 98.0 & 98.4 & 97.9 & 98.6 & 79.9 & 73.5 & 82.3 & 72.9 & 69.1 & 75.6 & 95.3 & 93.4 & 96.4 \\
     & SimplE                  & 98.3 & 97.2 & 99.1 & 97.1 & 95.5 & 98.7 & 97.2 & 96.4 & 97.6 & 73.0 & 65.9 & 75.5 & 71.6 & 67.1 & 74.8 & 92.4 & 89.2 & 94.8 \\
     & QuatE                   & 98.3 & 97.2 & 99.1 & 97.4 & 95.8 & 98.8 & 98.1 & 97.5 & 98.3 & 82.3 & 76.7 & 85.2 & 75.2 & 70.6 & 78.3 & 94.6 & 92.2 & 96.2 \\
    \midrule
    \multicolumn{1}{c|}{\multirow{2}{*}{\centering \textbf{Rule-based}}}
    & DRUM                   & 94.5 & 94.5 & 97.8 & 95.9 & 90.5 & 95.8 & 96.9 & 95.6 & 98.0 & 85.4 & 77.8 & 91.2 & 71.5 & 64.0 & 74.0 & 95.8 & 93.0 & 98.7 \\
   & PTransE            & - & - & - & 96.2 & 96.0 & 98.5 & 98.7 & 98.2 & 98.9 & 85.3 & 87.5 & 94.7 & - & - & - & - & - & - \\
     \midrule
    \multicolumn{1}{c|}{\multirow{4}{*}{\centering \textbf{GNN-based}}} 
    & R-GCN                  & - & - & - & 93.1 & 90.3 & 94.2 & 90.9 & 82.4 & 97.9 & 82.2 & 75.0 & 84.8 & - & - & - & - & - & - \\
   & Pathcon                & \underline{98.4} & \underline{97.4} & \underline{99.5} & \underline{97.9} & \underline{96.4} & \underline{99.4} & \underline{99.3} & \underline{98.8} & \underline{99.8} & 97.4 & 95.4 & \underline{99.4} & 89.6 & 84.4 & 94.1 & \underline{98.0} & \underline{96.6} & \underline{99.5} \\
   & LASS                   & - & - & - & 97.3 & 95.5 & 98.8 & 96.7 & 95.9 & 98.0 & 95.9 & 94.2 & 96.4 & - & - & - & - & - & - \\
   & CBLiP                  & - & - & - & 97.1 & 94.9 & 99.2 & - & - & - & \underline{97.6} & \underline{96.0} & 99.3 & \underline{91.9} & \underline{86.8} & \underline{96.4} & - & - & - \\
    \midrule
    \multicolumn{1}{c|}{\multirow{3}{*}{\centering \textbf{LLM-based}}}
    & KG-BERT                & - & - & - & 97.3 & 95.2 & 98.6 & 96.7 & 94.6 & 98.2 & 94.2 & 91.7 & 96.8 & - & - & - & - & - & - \\
   & KGE-BERT               & - & - & - & 97.2 & 94.3 & 99.0 & 96.5 & 95.7 & 97.9 & 94.8 & 92.4 & 96.1 & - & - & - & - & - & - \\
   & GilBERT                & - & - & - & 95.1 & 91.4 & 97.0 & 96.5 & 94.7 & 98.1 & 94.1 & 91.6 & 96.7 & - & - & - & - & - & - \\
    \midrule
    \multicolumn{1}{c|}{\centering \textbf{Ours}} & \textbf{FMS}  & \textbf{99.6} & \textbf{99.5} & \textbf{99.8} & \textbf{99.8} & \textbf{99.7} & \textbf{99.9} & \textbf{99.7} & \textbf{99.5} & \textbf{99.9} & \textbf{99.9} & \textbf{99.8} & \textbf{99.9} & \textbf{99.1} & \textbf{98.6} & \textbf{99.5} & \textbf{99.8} & \textbf{99.7} & \textbf{99.9} \\
    \bottomrule
  \end{tabular}
\end{table*}

\begin{table*}[t]
\vskip -0.1in
  \centering
    \caption{Inductive results (Hits@10). The best results are shown in bold while the second-best results are shown underlined.}
    \vskip -0.1in
  \label{tab:inductive_hits10_formatted_rounded}
  \setlength{\tabcolsep}{9pt}
\renewcommand{\arraystretch}{1.1}
  \begin{tabular}{c|c|cccc|cccc|cccc}
    \toprule
    \multirow{2}{*}{\textbf{Type}} & \multirow{2}{*}{\textbf{Model}} & \multicolumn{4}{c|}{\textbf{WN18RR}} & \multicolumn{4}{c|}{\textbf{FB15k-237}} & \multicolumn{4}{c}{\textbf{NELL-995}} \\
    \cmidrule(lr){3-6} \cmidrule(lr){7-10} \cmidrule(lr){11-14}
                                & & \textbf{v1} & \textbf{v2} & \textbf{v3} & \textbf{v4} & \textbf{v1} & \textbf{v2} & \textbf{v3} & \textbf{v4} & \textbf{v1} & \textbf{v2} & \textbf{v3} & \textbf{v4} \\
    \midrule
    \multicolumn{1}{c|}{\multirow{3}{*}{\centering \textbf{Rule-based}}} 
    & Neural-LP & 74.4 & 68.9 & 46.2 & 67.1 & 52.9 & 58.9 & 52.9 & 55.9 & 40.8 & 78.7 & 82.7 & 80.6 \\
    & DRUM      & 74.4 & 68.9 & 46.2 & 67.1 & 52.9 & 58.7 & 52.9 & 55.9 & 19.4 & 78.6 & 82.7 & 80.6 \\
    & RuleN     & 80.9 & 78.2 & 53.4 & 71.6 & 49.8 & 77.8 & 87.7 & 85.6 & \underline{53.5} & 81.8 & 77.3 & 61.4 \\
    \midrule
    \multicolumn{1}{c|}{\multirow{2}{*}{\centering \textbf{GNN-based}}} 
    & GraIL     & 82.5 & 78.7 & 58.4 & 73.4 & 64.2 & 81.8 & 82.8 & 89.3 & \textbf{59.5} & 93.3 & 91.4 & 73.2 \\
    & PathCon   & \underline{97.6} & \underline{97.1} & \underline{97.4} & \underline{98.7} & \underline{94.1} & \underline{95.6} & \underline{95.9} & \underline{96.0} & - & \underline{95.8} & \underline{94.5} & \underline{95.6} \\
    \midrule
    \multicolumn{1}{c|}{\multirow{1}{*}{\centering \textbf{Ours}}} 
    & FMS       & \textbf{99.2} & \textbf{98.4} & \textbf{99.0} & \textbf{99.5} & \textbf{98.4} & \textbf{99.5} & \textbf{99.6} & \textbf{99.9} & - & \textbf{99.5} & \textbf{99.2} & \textbf{99.8} \\
    \bottomrule
  \end{tabular}
\end{table*}

\begin{table}[t]
  \centering
  \caption{Results by relation category on FB15k-237.}
  \vskip -0.1in
  \label{tab:1_N}
    \setlength{\tabcolsep}{13pt}
     \renewcommand{\arraystretch}{1.1}
  \begin{tabular}{c|cc|cc}
    \toprule
    \multicolumn{1}{c|}{\raisebox{-\totalheight}{\textbf{Model}}} & \multicolumn{2}{c|}{\textbf{1-1}} & \multicolumn{2}{c}{\textbf{1-N}} \\
    \cmidrule(lr){2-3} \cmidrule(lr){4-5}
                               & \textbf{MRR} & \textbf{Hit@1} & \textbf{MRR} & \textbf{Hit@1} \\
    \midrule
    PathCon               &92.4  &88.9  &80.9  &74.2  \\
        FMS                         &\textbf{99.8}  &\textbf{99.7}  &\textbf{99.6}  &\textbf{99.3}  \\
    \bottomrule
  \end{tabular}
  \vskip -0.1in
\end{table}

\begin{table}[t]
  \centering
  \caption{Ablation study on key components of FMS.}
  \vskip -0.1in
\label{tab:ablation_fms}
 \renewcommand{\arraystretch}{1.1}
     \setlength{\tabcolsep}{7pt}
  \begin{tabular}{l|cc|cc}
    \toprule
    \multicolumn{1}{c|}{\raisebox{-\totalheight}{\textbf{Ablation Settings}}} & \multicolumn{2}{c|}{\textbf{FB15k-237}} & \multicolumn{2}{c}{\textbf{WN18RR}} \\
    \cmidrule(lr){2-3} \cmidrule(lr){4-5}
                               & \textbf{MRR} & \textbf{Hit@1} & \textbf{MRR} & \textbf{Hit@1} \\
    \midrule
    FMS                         &\textbf{99.8}  &\textbf{99.7}  &\textbf{99.9}  &\textbf{99.8}  \\
    FMS w/o Top-K               &97.8  &96.1  &94.3  &89.4  \\
    FMS w/o Score               &98.2  &97.1  &97.4  &95.9  \\
    FMS w/o Flow-Matching       &98.7  &97.9  &98.4  &97.5  \\
    \bottomrule
  \end{tabular}
  \vskip -0.1in
\end{table}

\subsection{Overall Performance Comparison: RQ1}
To answer RQ1, we conducted extensive experiments comparing FMS with the strongest baseline we know of, both in terms of transductive and inductive relation prediction.

The transductive results are summarized in \autoref{tab:main_results_final_formatted}. FMS demonstrates remarkable and consistent improvements across all six datasets. 
Specifically, FMS achieves a \textbf{1.2\% relative}, \textbf{1.7\% relative}, \textbf{0.4\% relative}, \textbf{2.5\% relative}, \textbf{10.6\% relative}, and \textbf{1.8\% relative} increase in MRR scores over the best baselines on FB15K, FB15k-237, WN18, WN18RR, NELL995, and DDB14, respectively. These consistent and significant gains across diverse datasets, particularly the substantial improvement on NELL995, underscore the robustness and superior capability of our proposed FMS model.
Furthermore, it is important to note that, compared to all kinds of models, FMS achieves better results for MRR scores on all datasets, establishing its state-of-the-art performance.

To further analyze the performance of FMS, we decomposed FB15k-237 based on relation categories similar to those defined in~\cite{wang2014knowledge}. We redefined the 1-N task by specifically isolating a test subset where each entity pair has two or more relation types. As shown in~\autoref{tab:1_N}, FMS achieved overwhelmingly superior performance, indicating its successful capture of the dynamic nature of relations.

The inductive results are summarized in \autoref{tab:inductive_hits10_formatted_rounded}. FMS shows remarkable improvement across Hits@10 metrics on twelve datasets. It is noteworthy that while the optimal performance of other models typically ranges between 70\% and 90\% except PathCon, the FMS model almost consistently achieves between 98\% and 99.9\%, demonstrating its overwhelming advantage in inductive learning capabilities. Furthermore, FMS maintains near-perfect performance across different versions and splits of various datasets, indicating that the model possesses excellent stability and robustness to data variations.

\subsection{Ablation Study: RQ2}
\label{sec4.4}

\subsubsection{Key Modules Ablation Study}
To verify the effectiveness of our proposed modules, we develop three distinct model variants:
\begin{itemize}
    \item \textbf{w/o Top-K}: We replace the Top-K selection mechanism based on semantic similarity in the Semantic Context Learning Module with a simple random sampling strategy.
    \item \textbf{w/o Score}: We remove the energy-model inspired scoring function, and instead use dot product similarity for scoring. 
    \item \textbf{w/o Flow-Matching}: We remove the Condition Flow-Match-ing Module, relying solely on the basic static scores of entity pairs.
\end{itemize}

The ablation study results are presented in \autoref{tab:ablation_fms}, leading to the following key conclusions: 
The Top-K selection mechanism exerts the most significant impact on the overall FMS performance, with its absence leading to a substantial decline in performance, particularly on the WN18RR dataset. This underscores Top-K's pivotal role in filtering critical information and suppressing noise. Secondly, the energy-model inspired scoring function is also crucial for evaluating and ranking candidate answers, and its removal results in a noticeable loss in performance. While each component is crucial, their collective impact exceeds the sum of their parts. This highlights that the synergistic interaction of semantic context selection, energy-based scoring, and dynamic flow modulation is the key factor enabling the FMS model to maintain its state-of-the-art performance.


\begin{figure*}[htbp]
    \centering

    \subfloat[\footnotesize Results with different Top-K.\label{fig:sub_topk_analysis}]{%
        \includegraphics[width=0.3\linewidth]{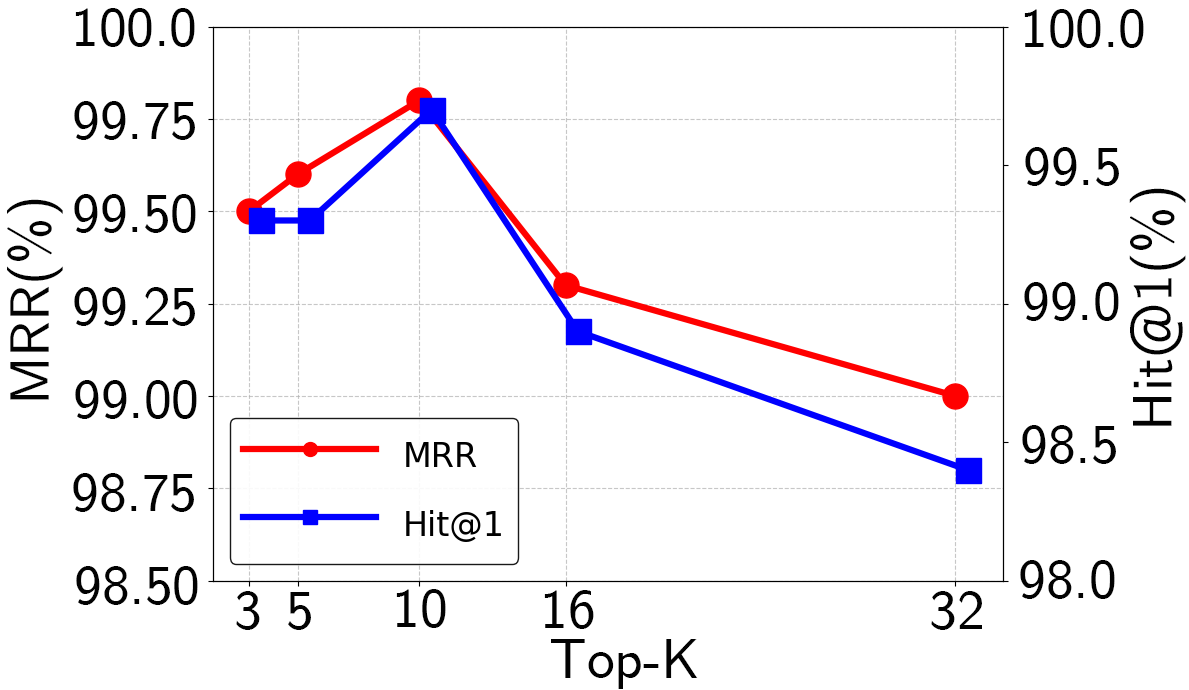}%
    }\hfill
    \subfloat[\footnotesize Results with different aggregators.\label{fig:sub_aggregator_analysis}]{%
        \includegraphics[width=0.35\linewidth]{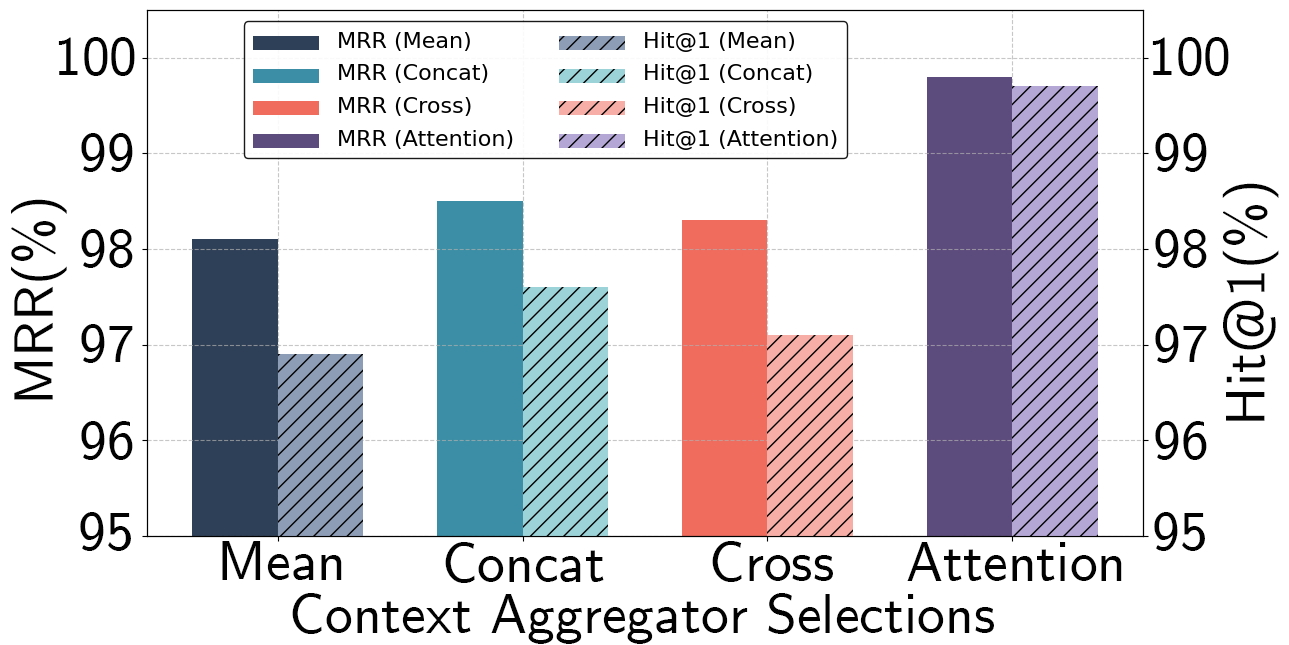}%
    }\hfill
    \subfloat[\footnotesize Results with different hops.\label{fig:sub_hops_analysis}]{%
        \includegraphics[width=0.32\linewidth]{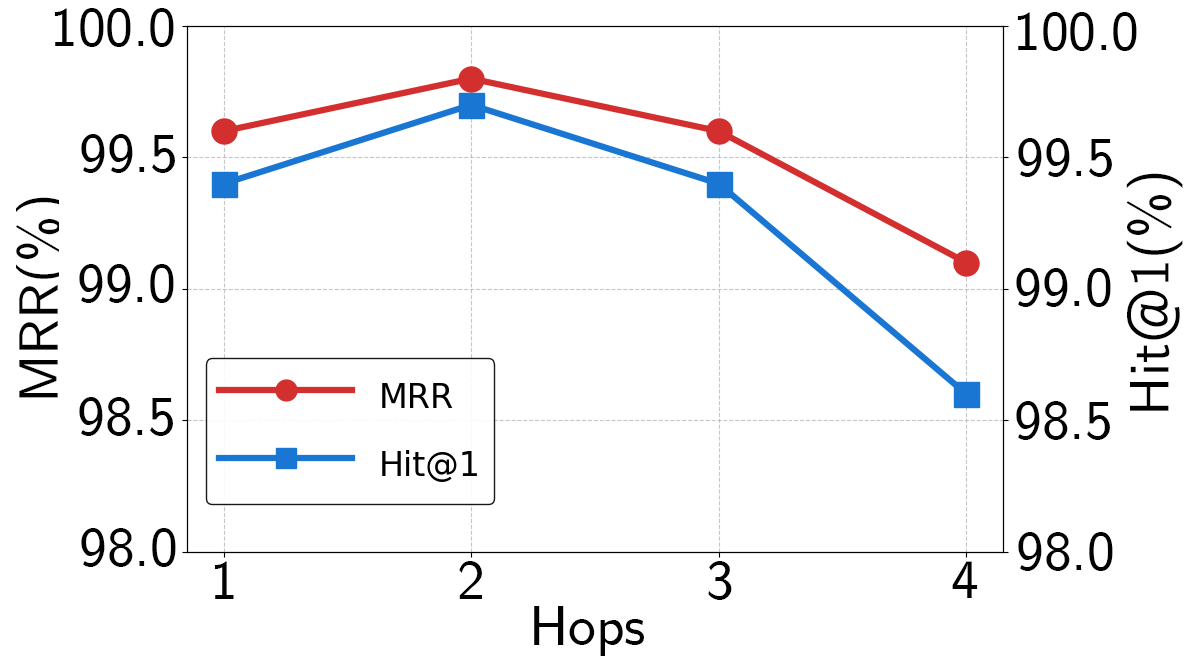}%
    }

    \caption{Hyper-parameters analysis of FMS on FB15k-237.}
    \label{fig:FMS_hyperparameter_analysis}
\vskip -0.2in
\end{figure*}

\subsubsection{Sensitivity to Key Hyperparameters}
In this study, we analyze the impact of key hyperparameters within the Semantic Context Learning Module and the effect of different aggregator choices. The results on the FB15k-237 dataset are presented in \autoref{fig:FMS_hyperparameter_analysis}.

\autoref{fig:FMS_hyperparameter_analysis}\subref{fig:sub_topk_analysis} shows how FMS performance varies with different Top-K values. The results demonstrate that a moderate K value (specifically 10 in this case) is critical. An overly small K provides insufficient context to leverage, while an excessively large K introduces increasing noise, potentially overwhelming the model with excessive contextual information and causing a drop in performance.

\autoref{fig:FMS_hyperparameter_analysis}\subref{fig:sub_aggregator_analysis} then shows the performance of FMS's Semantic Context Learning Module when using different context aggregators. It can be seen that our designed self-attention aggregator performs best, while the mean aggregator performs worst.

\autoref{fig:FMS_hyperparameter_analysis}\subref{fig:sub_hops_analysis} illustrates the selection of hop count on the graph. The results indicate that FMS achieves competitive performance at the first hop and optimal performance at the second hop. Further increasing the hop count, however, leads to performance degradation. This phenomenon is likely attributable to the exponential increase in neighbor sampling associated with deeper hops, which consequently introduces more noise. Nevertheless, owing to the mechanisms of Top-k selection and attention aggregation, the model maintains its state-of-the-art performance.

\begin{table}
  \centering
      \caption{Number of parameters of all models on DDB14.}
      \vskip -0.1in
  \label{tab:parameters_ddb14}
  {
    \setlength{\tabcolsep}{3pt}
  \begin{tabular}{c|cccccccc}
    \toprule
    \textbf{Model}   & \textbf{TransE} & \textbf{DisMult} &\textbf{RotatE}  & \textbf{QuatE} & \textbf{PathCon} & \textbf{FMS}  \\
    \midrule
    Param & 3.7M   & 3.7M    & 7.4M   & 14.7M & 0.06M   & \textbf{0.02M} \\ 
    \bottomrule
  \end{tabular}
  } 
\vskip -0.1in
\end{table}

\begin{table}
  \centering
  \caption{Number of parameters of all models on FB15k-237.}
  \vskip -0.1in
  \label{tab:parameters_fb15k237}
  {
    \setlength{\tabcolsep}{3pt}
  \begin{tabular}{c|cccccc} %
    \toprule
    \textbf{Model}   & \textbf{TransE} & \textbf{DisMult} &\textbf{RotatE}  & \textbf{QuatE} & \textbf{PathCon} & \textbf{FMS}  \\
    \midrule
    Param & 5.9M   & 5.9M    & 11.7M   & 23.6M & 1.67M   & \textbf{0.35M }\\ 
    \bottomrule
  \end{tabular}
  } 
\vskip -0.1in
\end{table}

\begin{figure*}[t]
    \centering
    \subfloat[\footnotesize DDB14 (14 relation types)\label{fig:case-ddb14}]{
        \includegraphics[width=0.48\textwidth]{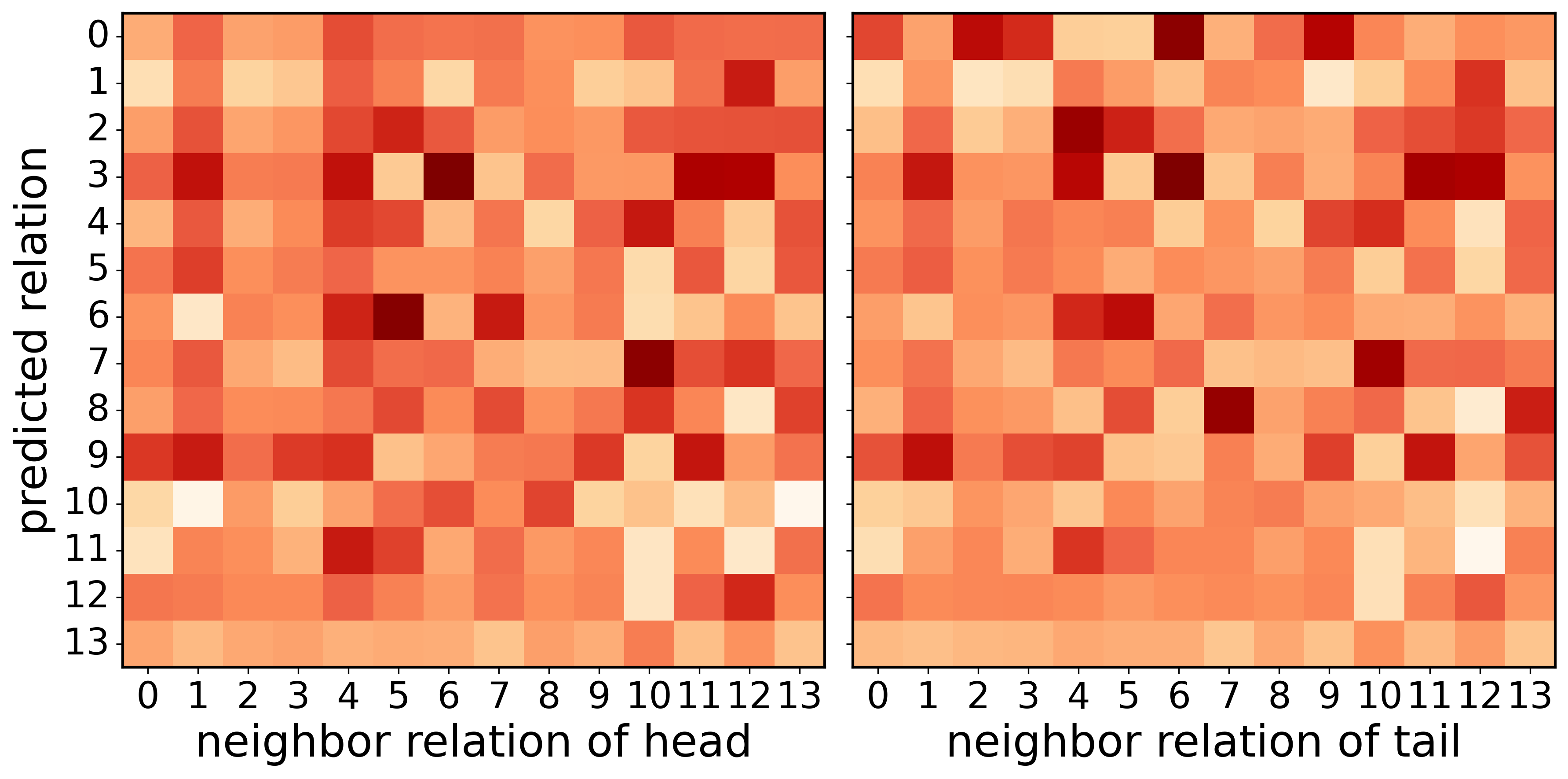}
    }
    \hfill
    \subfloat[\footnotesize WN18RR (11 relation types)\label{fig:case-wn18rr}]{
        \includegraphics[width=0.48\textwidth]{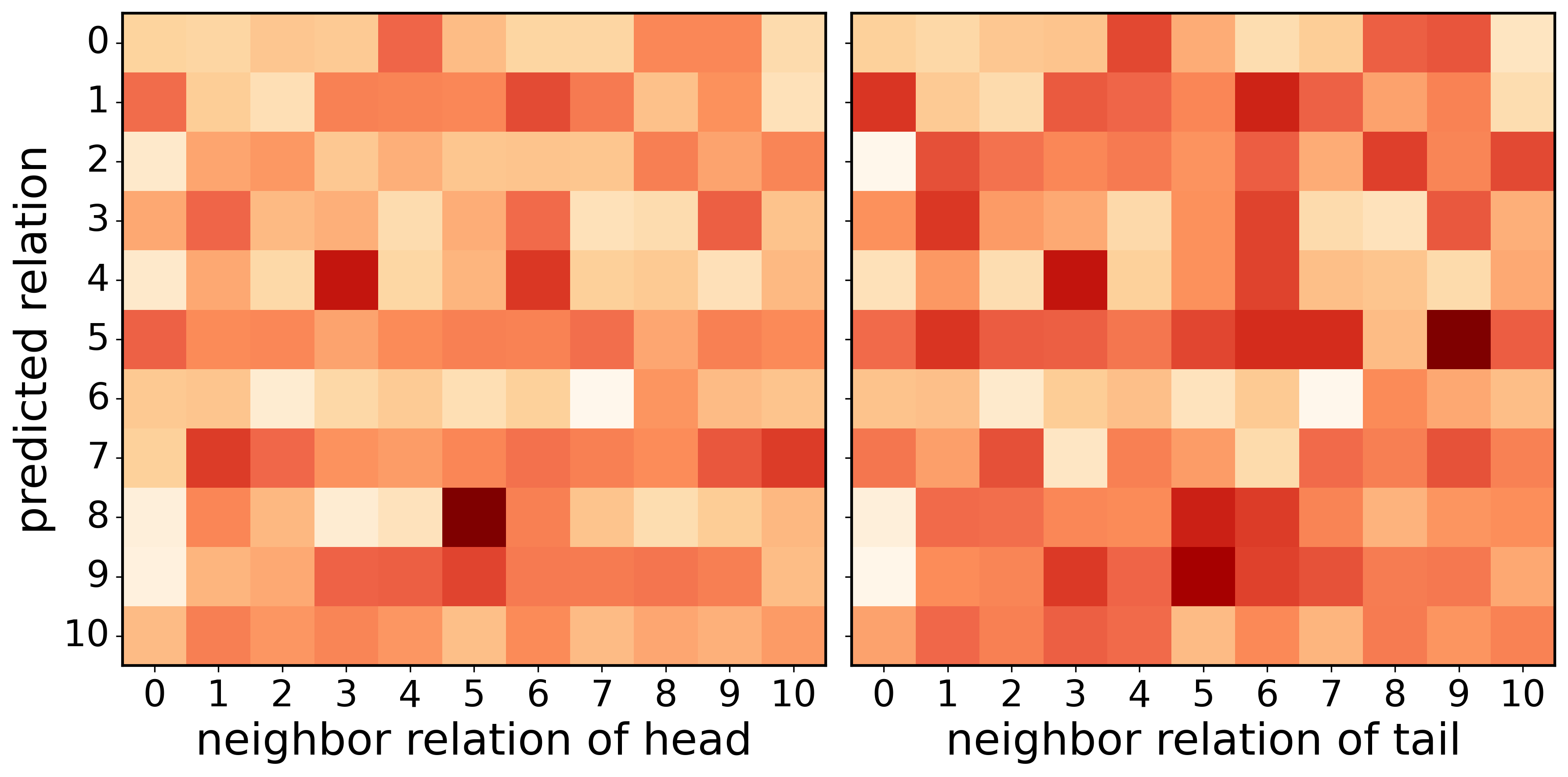}
    }
    \caption{The learned correlation between the contextual relations of head/tail entities and the predicted relations. Case studies are shown for the DDB14 dataset (a) and the WN18RR dataset (b).}
    \label{fig:correlation_case_study}
\vskip -0.1in
\end{figure*}

\subsection{Efficiency Analysis: RQ3}
We performed parametric volume analysis on both datasets, on the smaller dataset DDB14 as in \autoref{tab:parameters_ddb14} and on the larger dataset FB15k-237 as in \autoref{tab:parameters_fb15k237}. The model complexity is $\mathcal{O}(n \cdot K^{Hops})$.
The result demonstrates that FMS is much more
storage-efficient than embedding-based methods, since it does not
need to calculate entity embeddings. Furthermore, FMS is significantly more effective than GNN-based methods such as PathCon.
FMS can achieve superior performance with fewer hops and a smaller number of sampled neighbors.
Additionally, it does not require explicitly modeling paths from the head entity to the tail entity like PathCon;
instead, it employs flow-matching to learn a dynamic evolution vector field,
which represents semantically context-guided paths.




\subsection{Case Study on Contextual Information: RQ4}

To demonstrate the robust capability of our FMS in capturing the semantic context of relations, we present heatmaps of the modulated context-aggregated head/tail entity matrix on DDB14 (14 relation types) and WN18RR (11 relation types) in~\autoref{fig:correlation_case_study}. The intensity of the entries in~\autoref{fig:correlation_case_study} represents the strength of the correlation between the presence of a contextual relation and a predicted relation. Relation IDs and their corresponding meanings for DDB14 and WN18RR are detailed in~\autoref{tab:relation_ids}.

\begin{table}[t]
\centering
\caption{Relation IDs and Meanings on DDB14 and WN18RR Datasets.}
\vskip -0.1in
\label{tab:relation_ids}
\setlength{\tabcolsep}{3pt}
\begin{tabular}{llll}
\toprule
\multicolumn{4}{c}{\textbf{DDB14 Dataset}} \\
\midrule
ID & Relation Meaning & ID & Relation Meaning \\
\midrule
0 & belong(s) to the category of & 7 & interacts with \\
1 & is a category subset of & 8 & belongs to the drug family of \\
2 & may cause & 9 & belongs to drug super-family \\
3 & is a subtype of & 10 & is a vector for \\
4 & is a risk factor for & 11 & may be allelic with \\
5 & is associated with & 12 & see also \\
6 & may contraindicate & 13 & is an ingredient of \\
\midrule \midrule
\multicolumn{4}{c}{\textbf{WN18RR Dataset}} \\
\midrule
ID & Relation Meaning & ID & Relation Meaning \\
\midrule
0 & hypernym & 6 & has\_part \\
1 & derivationally\_related\_form & 7 & member\_of\_domain\_usage \\
2 & instance\_hypernym & 8 & member\_of\_domain\_region \\
3 & also\_see & 9 & verb\_group \\
4 & member\_meronym & 10 & similar\_to \\
5 & synset\_domain\_topic\_of & & \\
\bottomrule
\end{tabular}
\vskip -0.15in
\end{table}

~\autoref{fig:correlation_case_study} reveals a sophisticated learning mechanism that moves beyond superficial pattern matching. The prevalence of strong off-diagonal correlations indicates that the model transcends simple pattern memorization---typified by rules like $(h, R, t') \Rightarrow (h, R, t)$ ~\cite{wang2021relational}. Instead, the model constructs a network of latent semantic rules, which we categorize into the following core principles.

\begin{itemize}
    \item \textbf{Domain-Specific Rule Induction:} FMS demonstrates a remarkable ability to induce logical rules that are specifically adapted to the intrinsic properties of a given knowledge domain.
    
    In DDB14, it captures a clinical cause-and-effect relationship, inferring a rule that connects risk factors to clinical contraindications:
    \begin{equation*}
        \small
        (h, \text{is\_risk\_factor\_for}, t') \Rightarrow (h, \text{may\_contraindicate}, t).
    \end{equation*}

    In WN18RR, it models a fundamental linguistic principle linking etymology to semantics:
    \begin{equation*}
        \small
        (h, \text{etymologically\_related}, t') \Rightarrow (h, \text{usage\_domain}, t).
    \end{equation*}

    \item \textbf{Modeling of Hierarchical Structures:} FMS effectively captures hierarchical knowledge structures, enabling it to perform complex, multi-level reasoning from specific instances to general categories.
   
    In DDB14, it masters compositional reasoning, inferring macroscopic class membership from microscopic constituents:
    \begin{equation*}
        \small
        (h, \text{is\_ingredient\_of}, t') \Rightarrow (h, \text{belongs\_to\_drug\_family}, t).
    \end{equation*}
    In WN18RR, it demonstrates a form of inductive inference, leveraging horizontal similarity to determine vertical class membership:
    \begin{equation*}
        \small
        \begin{split}
        (h, \text{similar\_to}, h') \wedge (h', \text{instance\_hypernym}, t) \\
        \Rightarrow (h, \text{instance\_hypernym}, t).
        \end{split}
    \end{equation*}
\end{itemize}


\subsection{Case Study on Dynamic Evolution: RQ5}

To quantitatively evaluate the impact of the Flow-Modulated Scoring (FMS) framework, we investigate a case study from the FB15k-237 dataset. The task, detailed in \autoref{tab:case_study_example}, involves predicting the multiple valid financial relationships between an arts college and a currency. 

\autoref{fig:tsne_fms} presents T-SNE~\cite{vanDerMaaten2008visualizing} visualizations to compare the representations learned by FMS against those from traditional static scoring methods.

Central to the figure, the "Static Score Embedding" (black square) represents the generalized initial understanding that a traditional static model might offer for the relation between "California College of the Arts" and "United States Dollar". Due to its singular nature, this single point struggles to simultaneously and precisely encapsulate the specific role of "United States Dollar" as a currency across six different financial contexts.
The power of FMS is evident in its subsequent dynamic processing. Observing the "Dynamic Scores" (colored stars) in the figure, we see that FMS is not content with this single, potentially ambiguous static representation. Instead, it actively flows from this static starting point to generate specific embedding representations for each potential relation. Each arrow in the figure symbolizes this context-driven modulation process, guiding the initial representation towards more relevant and fine-grained semantic space regions associated with specific relation types.

FMS successfully distinguishes between multiple concurrently valid relations. It does not merely identify the generic concept of "currency"; rather, it can precisely differentiate specific relations such as "Revenue Currency" and "Assets Currency" within the embedding space. This clearly demonstrates its exceptional performance in capturing the dynamic nature of knowledge graph relations.

\begin{table}[t]
    \centering
    \caption{Case study example from the FB15k-237 dataset.}
    \vskip -0.1in
    \label{tab:case_study_example}
    \renewcommand{\arraystretch}{1} 
    \begin{tabularx}{\linewidth}{l X}
        \toprule
        \textbf{Component} & \textbf{Value} \\
        \midrule
        Head Entity & California College of the Arts \\
        Tail Entity & United States Dollar \\
        \textbf{\textit{Query}} & \textit{\textbf{(California College of the Arts, ?, United States Dollar)}} \\
        \textbf{Ground-Truth} & 
                \begin{itemize}[leftmargin=*, topsep=0pt, partopsep=0pt, itemsep=0pt, parsep=0pt]
            \item Revenue Currency
            \item Operating Income Currency
            \item Local Tuition Currency
            \item Assets Currency
            \item Domestic Tuition Currency
            \item Endowment Currency
        \end{itemize} 
        \\
        \bottomrule
    \end{tabularx}
\vskip -0.1in
\end{table}

\begin{figure}[t]
    \centering
    \includegraphics[width=1.02\linewidth]{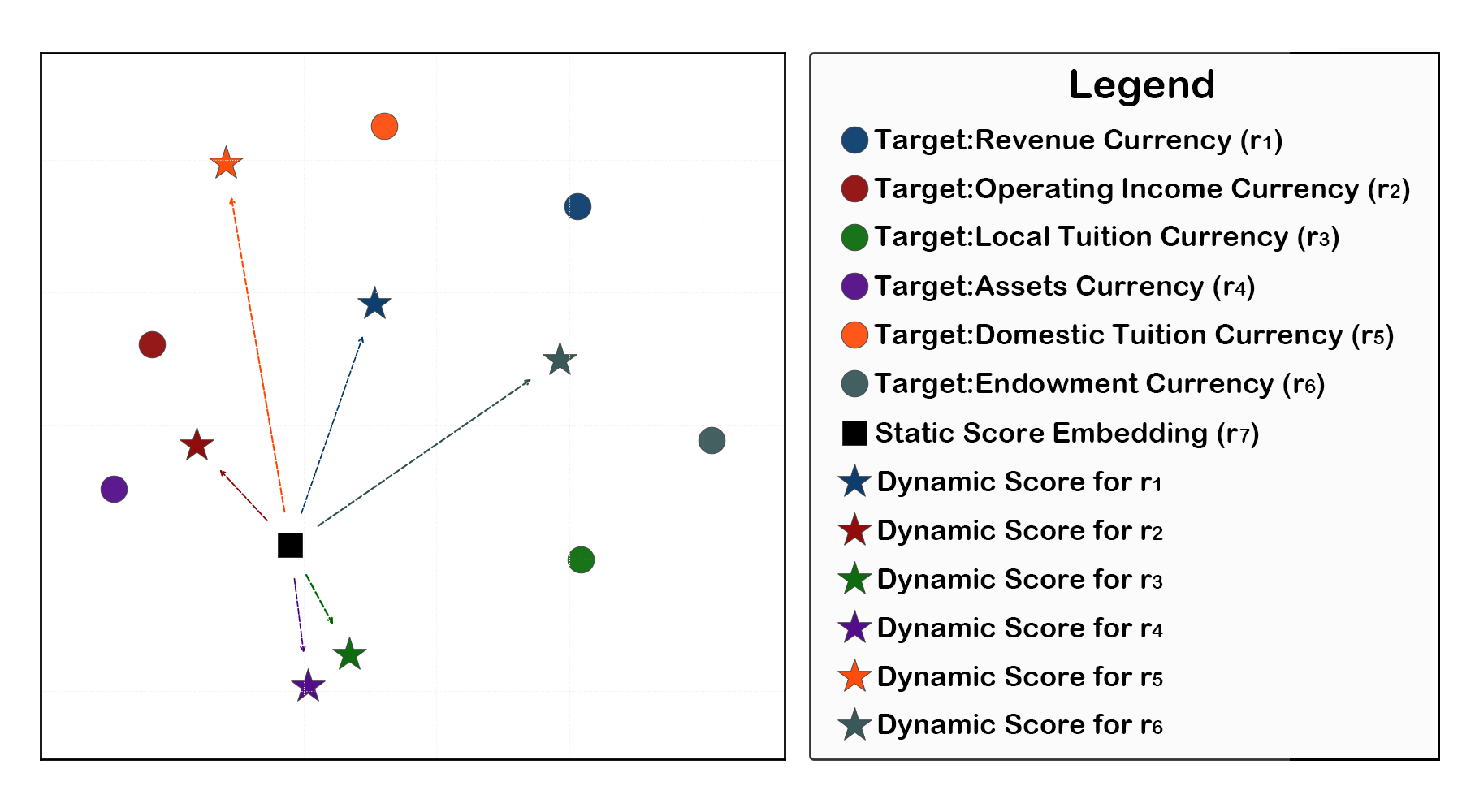}
    \vskip -0.15in
    \caption{T-SNE visualization of Static Score versus Flow-Modulated Score representations on FB15k-237.}
    \label{fig:tsne_fms}
\end{figure}

\section{Entity Prediction Experiment}
\label{sec5}

Building on its demonstrated superiority in relation prediction, we further validate FMS's versatility and power on the fundamental task of entity prediction.
As established previously, these two tasks are inter-reducible. 
To formally adapt our model for this task, we model the probability distribution over all candidate entities. 
Specifically, for a tail prediction query $(h, r, ?)$, the probability of an entity $t$ is given by applying a SoftMax function over the scores derived from our model FMS:
\begin{equation}
    p(t|h,r) = \text{SoftMax} \left( s_{(h, t)}\right).
\end{equation}
Head prediction $(?, r, t)$ is handled symmetrically.

It is crucial to note that this adaptation necessitates a change in the training paradigm. 
Unlike in relation prediction where the target relation $r$ is masked from the input pair $(h, t)$, the model for entity prediction must be trained with the relation $r$ as part of the input condition to properly predict the target entity. 
This rigorous approach allows us to confirm whether the superior performance of FMS extends to this fundamental KG completion task.

\begin{table*}[t]
  \centering
  \small
    \setlength{\tabcolsep}{5.8pt}
  \renewcommand{\arraystretch}{1.1}
  \caption{Transductive results. Performance comparison on FB15k-237, NELL995, Kinship and UMLS datasets. Best results are in \textbf{bold}, second best are \underline{underlined}.}
  \vskip -0.1in
  \label{tab:entity_results}
  \begin{tabular}{c|c|ccc|ccc|ccc|ccc}
    \toprule
    \multirow{2}{*}{\textbf{Type}} & \multirow{2}{*}{\textbf{Model}} & \multicolumn{3}{c|}{\textbf{FB15k-237}} & \multicolumn{3}{c|}{\textbf{NELL995}} & \multicolumn{3}{c|}{\textbf{Kinship}}  & \multicolumn{3}{c}{\textbf{UMLS}}\\
    \cmidrule(lr){3-5} \cmidrule(lr){6-8} \cmidrule(lr){9-11} \cmidrule(lr){12-14}
                                & & \textbf{MRR} & \textbf{H@1} & \textbf{H@3} & \textbf{MRR} & \textbf{H@1} & \textbf{H@3} & \textbf{MRR} & \textbf{H@1} & \textbf{H@3} & \textbf{MRR} & \textbf{H@1} & \textbf{H@3} \\
    \midrule
    \multicolumn{1}{c|}{\multirow{6}{*}{\textbf{Emb-based}}} & TransE                 & 28.9 & 19.8 & 32.4          & 32.6 & 22.5    & 39.9             & 26.3 & 0.5  & 41.8           & 24.7 & 7.3 & 29.7 \\
      & ComplEx                   & 24.7 & 15.8 & 27.5          & 23.7 & 19.7    & 25.2             & 41.8 & 24.2 & 49.9          & 41.1 & 27.3 & 46.8 \\
        & RotaE                   & 33.7 & 24.1 & 37.5          & 35.6 & 29.8    & 39.5             & 65.1 & 50.4 & 75.5          & 74.4 & 63.6 & 82.2 \\
   & ConvE                  & 32.5 & 23.7 & 35.6          & 51.1 & 44.6    & -             & 68.5 & 55.2 & 78.6          & 75.6 & 69.7 & 80.7 \\
  & TDN                  & 35.0 & 26.3 & 39.5          & - & -    & -             & 78.0 & 67.7 & 86.7          & 86.0 & 82.4 & 87.3 \\
    \midrule
    \multicolumn{1}{c|}{\multirow{2}{*}{\textbf{Rule-based}}}
    & RNNLogic                 & 34.4 & 25.2 & 38.0          & 41.6 & 36.3   & -             & 72.2 & 59.8 & 81.4          & 84.2 & 77.2 & 89.1 \\
   & DRUM                   & 23.8 & 17.4 & 26.1          & 53.2 & 46.0    & -             & 33.4 & 18.3 & 37.8          & 54.8 & 35.8 & 69.9 \\
     \midrule
    \multicolumn{1}{c|}{\multirow{4}{*}{\textbf{GNN-based}}}
   & PathCon                  & 50.6 & 39.6 & 56.3          & 49.5 & 44.4   & 51.9             & 78.8     & 65.6    & \underline{91.5}             & 90.1     & 85.7   & 93.2 \\
   & NBFNet                   & 41.5 & 32.1 & 45.4          & 52.5 & 45.1   & -             & 60.6 & 43.5 & 72.5          & 77.8 & 68.8 & 84.0 \\
       & RED-GNN
       & 38.2 & 31.7 & 42.9 & 54.9 & 46.3    & \underline{60.9}             & 78.5    & 69.0    & 85.4             & 96.3     & 94.5    & \underline{97.6} \\
    & AdaProp                  & 42.5 & 31.0 & 47.0 & \underline{55.2} & \underline{49.7}    & 58.6            & 83.6    & 74.9    & 91.1            & \underline{96.4}   & \underline{94.6}    & \textbf{97.9} \\
    \midrule
    \multicolumn{1}{c|}{\multirow{3}{*}{\textbf{Diffusion-based}}}
    & FDM                    & 48.5 & 38.6 & 52.9          & - & -    & -             & \underline{83.7} & \underline{76.1} & 89.7 & 92.2 & 89.3 & 94.4 \\
    & KGDM                   & \underline{52.0} & \underline{42.3} & \underline{56.6} & - & -    & -             & 78.9 & 68.7 & 87.0          & 90.9 & 87.2 & 93.7 \\
    & DiffusionE             & 37.6 & 29.4 & -         & 52.2 & 49.0   & -            & -     & -    & -             & \textbf{97.0} & \textbf{95.7} & - \\
    \midrule
    \multicolumn{1}{c|}{\textbf{Ours}} & \textbf{FMS}  & \textbf{67.2} & \textbf{57.4} & \textbf{72.9}& \textbf{60.6} & \textbf{54.9} & \textbf{65.8} & \textbf{99.3} & \textbf{98.7} & \textbf{99.9} & 92.5 & 89.0 & 94.8 \\
    \bottomrule
  \end{tabular}
\end{table*}

\subsection{Transductive Settings}
\subsubsection{Datasets}
We conduct experiments on four knowledge graph datasets: (i) FB15k-237 (see \textcolor{red}{Section} \ref{sec4.1} for details); (ii) NELL995, specifically an extended version (see \textcolor{red}{Section} \ref{sec4.1} for background); (iii) Kinship, a knowledge graph describing kinship relations, focusing on family and blood relations~\cite{hinton1986kinship}; (iv) UMLS, a biomedical knowledge graph encompassing medical terminology, concepts, and their interrelationships~\cite{bodenreider2004unified}. The statistics above are summarized in \autoref{tab:all_dataset_stats_ent}.

\subsubsection{Baselines}
 We categorize all baseline methods into four groups: \textbf{(i) Embedding-based:} TransE~\cite{bordes2013translating}, ComplEx~\cite{trouillon2016complex}, RotatE~\cite{sun2019rotate}, ConvE~\cite{dettmers2018convolutional}, TDN~\cite{wang2023tdn}, FDM~\cite{long2024fact}; \textbf{(ii) Rule-based:} RNNlogic~\cite{qu2020rnnlogic}, DRUM~\cite{sadeghian2019drum}; \textbf{(iii) GNN-based:}Pathcon~\cite{wang2021relational}, NBFNet~\cite{NBFnet2021}, RED-GNN~\cite{zhang2022knowledge}, AdaProp~\cite{zhang2023adaprop};
 \textbf{(iv) Diffusion-Based:} FDM~\cite{long2024fact}, KGDM~\cite{Long_Zhuang_Li_Wei_Li_Wang_2024}, DiffusionE~\cite{cao2024diffusione}.

\subsubsection{Evaluation Metrics}
To ensure a fair comparison, our training and evaluation
setup follows that of RED-GNN~\cite{zhang2022knowledge}\footnote{ \url{https://github.com/LARS-research/RED-GNN}}. For evaluation, each test triplet $(h, r, t)$ is used to create two entity prediction queries: a tail prediction query $(h, r, ?)$ and a head prediction query $(?, r, t)$. 
The objective is to rank the ground-truth entity ($t$ for the former, $h$ for the latter) against a set of candidate entities. 
These candidates are generated by corrupting the triplet; for instance, in tail prediction, the ground truth $t$ is replaced by every other entity $t'$ in the knowledge graph to form negative samples $(h, r, t')$. 
We report Mean Reciprocal Rank (MRR) and Hits@N under the standard filtered setting~\cite{sun2019rotate}, which ensures that any other valid triplets appearing in the candidate set are removed before ranking.

\subsubsection{Implementation Details}

Our experimental setup is largely consistent with the details provided in \textcolor{red}{Section}~\ref{sec4.1}. 
However, we make specific adjustments to three key hyperparameters: the number of context hops, the number of neighbor samples, and the Top-K value. 
This modification is motivated by the significant disparity between the number of entities and relations; for instance, FB15k-237 contains 14,541 entities but only 237 relations. 
Modeling the probability distribution over a much larger set of entities necessitates richer semantic information. 
Therefore, we uniformly set the number of context hops to 2 and the Top-K value to 10 across all datasets. 
The number of neighbor samples is generally set to 16, with a specific exception for the NELL-995 dataset, for which we use a value of 10.

\subsection{Inductive Settings}

\subsubsection{Datasets}
Building upon this, we proceed with four subsets each from the FB15k237 and NELL-995 datasets, for a detailed examination of the dataset specifics, refer to \textcolor{red}{Section}~\ref{sec4.2}.

\subsubsection{Baselines}
For inductive learning, traditional Embedding-based methods are not suitable for the task due to their inability to generalize to unseen entities, we choose the following two classes of classical and effective models: 
\textbf{(i) Rule-based:} RuleN~\cite{meilicke2018fine}, NeuralLP~\cite{yang2017differentiable}, DRUM~\cite{sadeghian2019drum};
\textbf{(ii) GNN-based:} GraIL~\cite{teru2020inductive}, PathCon~\cite{wang2021relational}.

\subsubsection{Baselines}

Since completion methods that learn entity embeddings during training cannot work in this setting, we select baselines from the following three categories:
\textbf{(i) Rule-based methods:} NeuralLP~\cite{yang2017differentiable}, DRUM~\cite{sadeghian2019drum}, and RuleN~\cite{meilicke2018fine}.
\textbf{(ii) GNN-based methods:} GraIL~\cite{teru2020inductive}, PathCon~\cite{wang2021relational}, NBFNet~\cite{NBFnet2021}, RED-GNN~\cite{zhang2022knowledge}, and AdaProp~\cite{zhang2023adaprop}.
\textbf{(iii) Diffusion-based methods:} DiffusionE~\cite{cao2024diffusione}.
It is worth noting that KGDM and FDM use an embedding-based approach to encode entities and relations; therefore, they are not considered in the inductive learning setting.

\subsubsection{Evaluation Metrics}
The setup and parameter selection are consistent with the transduction entity prediction experiments.

\begin{table}[t] 
    \centering
    \caption{MRR comparison for entity prediction by multiple relation category on FB15k-237.}\label{tab:entity_relation_category}
    \vskip -0.1in
    \setlength{\tabcolsep}{7pt}
    \begin{tabular}{lccccc}
    \toprule
    &  \textbf{TransE} &  \textbf{RotatE} &  \textbf{NBFNet} & \textbf{FDM} & \textbf{FMS}\\
    \midrule
        \textbf{Head Pred} & & & & & \\
        \multicolumn{1}{c}{1-N}  & 7.9 & 8.1 & 16.5 & \underline{20.3} & \textbf{39.8} \\
        \multicolumn{1}{c}{N-1}  & 45.5 & 46.7 & 49.9 & \underline{55.9} & \textbf{69.0} \\
        \multicolumn{1}{c}{N-N}  & 22.4 & 23.4 & 34.8 & \underline{42.3} & \textbf{67.7}\\
    \midrule
    \textbf{Tail Pred} & & & & & \\
        \multicolumn{1}{c}{1-N}  & 74.4 & 74.7 & 79.0 & \underline{82.6} & \textbf{85.1} \\
        \multicolumn{1}{c}{N-1}  & 7.1 & 7.0 & 12.2 & \underline{16.7} & \textbf{33.9} \\
        \multicolumn{1}{c}{N-N}  & 33.0 & 33.8 & 45.6 & \underline{54.3} & \textbf{66.5} \\
    \bottomrule
    \end{tabular}
  \vskip -0.1in
\end{table}

\begin{table}[t]
  \centering
  \caption{Statistics of all datasets. $\mathbb{E}[d]$ and $\mathrm{Var}[d]$ are mean and variance of the node degree distribution, respectively.}
\vskip -0.1in
  \label{tab:all_dataset_stats_ent}
  \renewcommand{\arraystretch}{1.1}
  \setlength{\tabcolsep}{3.5pt}
  \begin{tabular}{c | c c c c c c c}
    \toprule
    \textbf{Dataset} & \textbf{Nodes} & \textbf{Relations} & \textbf{Train} & \textbf{Val} & \textbf{Test} & \textbf{$\mathbb{E}[d]$} & \textbf{$\mathrm{Var}[d]$} \\
    \midrule
    NELL995   & 74.5k & 200 & 37.4k & 543  & 2.8k & 2.6 & 124.7 \\
    Kinship   & 104   & 25  & 3.2k  & 2.1k & 5.3k & 205.5 & 2.8 \\
    UMLS      & 135   & 46  & 1.3k  & 569  & 633  & 37.4 & 1205.3 \\
    \bottomrule
  \end{tabular}
    \vskip -0.1in
\end{table}

\begin{table*}[b]
  \centering
  \footnotesize
  \caption{Inductive results. The best results are shown in bold while the second-best results are shown underlined.}
\vskip -0.1in
  \label{tab:entity_inductive_results_mrr_h10}
  \renewcommand{\arraystretch}{1.3}
  \setlength{\tabcolsep}{2.5pt}
  \begin{tabular}{c|c|cc|cc|cc|cc|cc|cc|cc|cc}
    \toprule
    \multirow{3}{*}{\textbf{Type}} & \multirow{3}{*}{\textbf{Model}} & \multicolumn{8}{c|}{\textbf{FB15k-237}} & \multicolumn{8}{c}{\textbf{NELL-995}} \\
    \cmidrule(lr){3-10} \cmidrule(lr){11-18}
    & & \multicolumn{2}{c|}{\textbf{v1}} & \multicolumn{2}{c|}{\textbf{v2}} & \multicolumn{2}{c|}{\textbf{v3}} & \multicolumn{2}{c|}{\textbf{v4}} & \multicolumn{2}{c|}{\textbf{v1}} & \multicolumn{2}{c|}{\textbf{v2}} & \multicolumn{2}{c|}{\textbf{v3}} & \multicolumn{2}{c}{\textbf{v4}} \\
    \cmidrule(lr){3-4} \cmidrule(lr){5-6} \cmidrule(lr){7-8} \cmidrule(lr){9-10} \cmidrule(lr){11-12} \cmidrule(lr){13-14} \cmidrule(lr){15-16} \cmidrule(lr){17-18}
    & & \textbf{MRR} & \textbf{H@10} & \textbf{MRR} & \textbf{H@10} & \textbf{MRR} & \textbf{H@10} & \textbf{MRR} & \textbf{H@10} & \textbf{MRR} & \textbf{H@10} & \textbf{MRR} & \textbf{H@10} & \textbf{MRR} & \textbf{H@10} & \textbf{MRR} & \textbf{H@10} \\
    \midrule
    \multicolumn{1}{c|}{\multirow{3}{*}{\centering \textbf{Rule-based}}} 
    & Neural-LP &32.5 & 46.8 &38.9 & 58.6 & 40.0 & 57.1 &39.6 & 59.3 &61.0 & 87.1 &36.1 & 56.4 &36.7 & 57.6 &26.1 & 53.9\\
    & DRUM      &33.3 & 47.4 &39.5 & 59.5 &40.2 & 57.1 &41.0 & 59.3 &62.8 & \underline{87.3} &36.5 & 54.0 &37.5 & 57.7 &27.3 & 53.1 \\
    & RuleN     &36.3 & 44.6 &43.3 & 59.9 &43.9 & 60.0 &42.9 & 60.5 &61.5 & 76.0 &38.5 & 51.4 &38.1 & 53.1 &33.3 & 48.4 \\
    \midrule
    \multicolumn{1}{c|}{\multirow{5}{*}{\centering \textbf{GNN-based}}} 
    & GraIL     & 27.9 & 42.9 & 27.6 & 42.4 & 25.1 & 42.4 & 22.7 & 38.9 & 48.1 & 56.5 & 29.7 & 49.6 & 32.2 & 51.8 & 26.2 & 50.6 \\
    & PathCon   &31.7 &53.9 &33.3 &54.7 &36.1 &54.7 &40.1 &61.3 & - & - & 37.3 & 59.4 & 45.6 & 59.0 & 29.2 & 44.8 \\
    & NBFNet    & 30.7 & 51.7 & 36.9 & 63.9 & 33.1 & 58.8 & 30.5 & 55.9 & 58.4 & 79.5 & 41.0 & 63.5 & 42.5 & 60.6 & 28.7 & \underline{59.1} \\
    & RED-GNN   & \underline{36.9} & 48.3 & 46.9 & 62.9 & 44.5 & 60.3 & 44.2 & 62.1 & 63.7 & 86.6 & 41.9 & 60.1 & 43.6 & 59.4 & 36.3 & 55.6 \\
    & AdaProp   & 31.0 & \underline{55.1} & 47.1 & 65.9 & \underline{47.1} & \underline{63.7} & \underline{45.4} & \underline{63.8} & \underline{64.4} & \textbf{88.6} & \underline{45.2} & 65.2 & 43.5 & 61.8 & \underline{36.6} & \textbf{60.7} \\
    \midrule
        \multicolumn{1}{c|}{\multirow{1}{*}{\centering \textbf{Diffusion-based}}} 
    & DiffusionE &31.0 & 52.5 & \underline{47.4} & \underline{66.1} & 45.6 & 62.7 & 44.6 & 62.5 & \textbf{67.3} & \underline{87.3} & 42.4 & \underline{66.0} & \underline{45.8} & \underline{62.0} & 30.8 & 52.3\\
    \midrule
    \multicolumn{1}{c|}{\multirow{1}{*}{\centering \textbf{Ours}}} 
    & FMS       &\textbf{42.3} &\textbf{63.3} &\textbf{52.8} &\textbf{70.6} &\textbf{53.5} &\textbf{73.3} &\textbf{57.6} &\textbf{75.6} &- &- &\textbf{50.4} &\textbf{68.8} &\textbf{54.7} &\textbf{67.6} &\textbf{39.1} &53.9 \\
    \bottomrule
  \end{tabular}
\end{table*}

\subsection{Overall Performance}

To evaluate FMS on entity prediction, we conducted a parallel set of experiments, with results presented for both transductive and inductive settings.

The transductive results are summarized in \autoref{tab:entity_results}. FMS demonstrates superior performance across the majority of datasets and metrics. 
Specifically, FMS achieves a remarkable \textbf{25.2\% relative} increase in MRR over the strongest baseline on FB15k-237, a \textbf{5.4\% relative} increase on NELL995, and an \textbf{15.6\% relative} increase on Kinship. 
Particularly on the Kinship dataset, FMS achieves a near-perfect MRR of 99.3, showcasing its exceptional modeling capacity on dense, complex graph structures.
While DiffusionE shows the best performance on UMLS, FMS still delivers highly competitive results, establishing itself as a new state-of-the-art model for transductive entity prediction on diverse benchmarks.

To delve deeper into its capabilities, we analyze FMS's performance on different relation categories within FB15k-237, as detailed in \autoref{tab:entity_relation_category}. FMS achieves overwhelmingly superior performance across all prediction scenarios (1-N, N-1, N-N for both Head and Tail prediction). 
The improvements are especially pronounced in more challenging categories. For instance, in the N-1 tail prediction task, FMS more than doubles the MRR of the best baseline, increasing it from 16.7 to 33.9. 
This uniform dominance indicates that FMS successfully captures the intricate and varied semantic mappings between entities, demonstrating its exceptional ability to handle complex reasoning scenarios across all relation types.

The inductive entity prediction results are presented in \autoref{tab:entity_inductive_results_mrr_h10}. 
FMS consistently and significantly outperforms all baselines across all versions of the FB15k-237 and NELL-995 datasets, often by a large margin. 
For example, on FB15k-237 v4, FMS achieves an MRR of 57.6, surpassing the next-best model's 45.4. Similarly, on NELL-995 v3, FMS's MRR of 54.7 is substantially higher than the runner-up's 45.8.
Furthermore, the model's strong and stable performance across different subgraph versions (v1 to v4) highlights its excellent robustness and generalization capabilities in unseen entity settings. This consistent and significant outperformance across diverse settings underscores the powerful generalization capabilities of FMS in inductive learning.

\section{Conclusion}

In this work, we introduced Flow-Modulated Scoring (FMS), a novel framework designed to address the limitations of existing knowledge graph completion methods in modeling the semantic contextual dependency and dynamic nature of relations. FMS innovatively combines a semantic context learning module, which generates context-aware entity embeddings, with a condition flow-matching module that learns the evolution dynamics between entities based on this context. By using the predicted vector field to dynamically modulate a base static score, FMS effectively synergizes rich static representations with conditioned dynamic flow information. Our comprehensive experiments on several well-established benchmarks demonstrate that FMS achieves state-of-the-art performance, validating its capability to capture more comprehensive relational semantics and advance the field of knowledge graph completion.

Despite its empirical success, a primary limitation of FMS lies in its significant reliance on the semantic quality of relational features. The model's performance is intrinsically tied to its ability to capture expressive relational patterns. Consequently, on large-scale knowledge graphs characterized by a vast and diverse set of relations—particularly where some relations may be sparse or semantically ambiguous—the efficacy of FMS may be constrained.

To address this limitation, a promising avenue for future research is the incorporation of entity-level features. By jointly modeling both entity and relational semantics, we can potentially achieve a more holistic and robust representation of factual knowledge. However, this extension introduces a non-trivial trade-off: integrating rich entity features would significantly increase the model's parameter count and computational overhead. Therefore, a key challenge will be to strike a balance between enhanced expressiveness and model efficiency. This necessitates the exploration of more scalable encoding and aggregation mechanisms, such as parameter-sharing techniques or lightweight attention modules, to create a more comprehensive yet practical model.

\bibliographystyle{IEEEtran}
\bibliography{sample-base}

\begin{thebibliography}{10}
\providecommand{\url}[1]{#1}
\csname url@samestyle\endcsname
\providecommand{\newblock}{\relax}
\providecommand{\bibinfo}[2]{#2}
\providecommand{\BIBentrySTDinterwordspacing}{\spaceskip=0pt\relax}
\providecommand{\BIBentryALTinterwordstretchfactor}{4}
\providecommand{\BIBentryALTinterwordspacing}{\spaceskip=\fontdimen2\font plus
\BIBentryALTinterwordstretchfactor\fontdimen3\font minus \fontdimen4\font\relax}
\providecommand{\BIBforeignlanguage}[2]{{%
\expandafter\ifx\csname l@#1\endcsname\relax
\typeout{** WARNING: IEEEtran.bst: No hyphenation pattern has been}%
\typeout{** loaded for the language `#1'. Using the pattern for}%
\typeout{** the default language instead.}%
\else
\language=\csname l@#1\endcsname
\fi
#2}}
\providecommand{\BIBdecl}{\relax}
\BIBdecl

\bibitem{cui2025RS}
\BIBentryALTinterwordspacing
Z.~Cui, Y.~Weng, X.~Tang, F.~Lyu, D.~Liu, X.~He, and C.~Ma, ``Comprehending knowledge graphs with large language models for recommender systems,'' in \emph{Proceedings of the 48th International ACM SIGIR Conference on Research and Development in Information Retrieval}, ser. SIGIR '25.\hskip 1em plus 0.5em minus 0.4em\relax New York, NY, USA: Association for Computing Machinery, 2025, p. 1229–1239. [Online]. Available: \url{https://doi.org/10.1145/3726302.3729932}
\BIBentrySTDinterwordspacing

\bibitem{chen2025data}
Z.~Chen, W.~Gan, J.~Wu, K.~Hu, and H.~Lin, ``Data scarcity in recommendation systems: A survey,'' \emph{ACM Transactions on Recommender Systems}, vol.~3, no.~3, pp. 1--31, 2025.

\bibitem{zhang2025comprehensive}
Y.~Zhang, X.~Sui, F.~Pan, K.~Yu, K.~Li, S.~Tian, A.~Erdengasileng, Q.~Han, W.~Wang, J.~Wang \emph{et~al.}, ``A comprehensive large-scale biomedical knowledge graph for ai-powered data-driven biomedical research,'' \emph{Nature Machine Intelligence}, pp. 1--13, 2025.

\bibitem{omar2023universal}
R.~Omar, I.~Dhall, P.~Kalnis, and E.~Mansour, ``A universal question-answering platform for knowledge graphs,'' \emph{Proceedings of the ACM on Management of Data}, vol.~1, no.~1, pp. 1--25, 2023.

\bibitem{tkde2}
Y.~Lu, H.~Chen, Y.~Rao, J.~Yu, W.~Hua, and Q.~Li, ``An efficient fuzzy system for complex query answering on knowledge graphs,'' \emph{IEEE Transactions on Knowledge and Data Engineering}, vol.~37, no.~9, pp. 4962--4976, 2025.

\bibitem{nicholson2020constructing}
D.~N. Nicholson and C.~S. Greene, ``Constructing knowledge graphs and their biomedical applications,'' \emph{Computational and Structural Biotechnology Journal}, vol.~18, pp. 1414--1428, 2020.

\bibitem{hogan2021knowledge}
A.~Hogan, E.~Blomqvist, M.~Cochez, C.~d'Amato, G.~de~Melo, C.~Gutierrez, S.~Kirrane, J.~E. Labra~Gayo, R.~Navigli, S.~Neumaier \emph{et~al.}, ``Knowledge graphs,'' \emph{ACM Computing Surveys (CSUR)}, vol.~54, no.~4, pp. 1--37, 2021.

\bibitem{dong2023hierarchy}
J.~Dong, Q.~Zhang, X.~Huang, K.~Duan, Q.~Tan, and Z.~Jiang, ``Hierarchy-aware multi-hop question answering over knowledge graphs,'' in \emph{Proceedings of the ACM Web Conference 2023}, 2023, pp. 2519--2527.

\bibitem{survey4kg2024}
K.~Liang, L.~Meng, M.~Liu, Y.~Liu, W.~Tu, S.~Wang, S.~Zhou, X.~Liu, F.~Sun, and K.~He, ``A survey of knowledge graph reasoning on graph types: Static, dynamic, and multi-modal,'' \emph{IEEE Transactions on Pattern Analysis and Machine Intelligence}, vol.~46, no.~12, pp. 9456--9478, 2024.

\bibitem{tkde1}
S.~Pan, L.~Luo, Y.~Wang, C.~Chen, J.~Wang, and X.~Wu, ``Unifying large language models and knowledge graphs: A roadmap,'' \emph{IEEE Transactions on Knowledge and Data Engineering}, vol.~36, no.~7, pp. 3580--3599, 2024.

\bibitem{ji2021survey}
S.~Ji, S.~Pan, E.~Cambria, P.~Marttinen, and S.~Y. Philip, ``A survey on knowledge graphs: Representation, acquisition, and applications,'' \emph{IEEE transactions on neural networks and learning systems}, vol.~33, no.~2, pp. 494--514, 2021.

\bibitem{tpami1}
Y.~Zhang, Q.~Yao, and J.~T. Kwok, ``Bilinear scoring function search for knowledge graph learning,'' \emph{IEEE Transactions on Pattern Analysis and Machine Intelligence}, vol.~45, no.~2, pp. 1458--1473, 2023.

\bibitem{tkde3}
X.~Zhu, Z.~Li, X.~Wang, X.~Jiang, P.~Sun, X.~Wang, Y.~Xiao, and N.~J. Yuan, ``Multi-modal knowledge graph construction and application: A survey,'' \emph{IEEE Transactions on Knowledge and Data Engineering}, vol.~36, no.~2, pp. 715--735, 2024.

\bibitem{sun2019rotate}
Z.~Sun, Z.-H. Deng, J.-Y. Nie, and J.~Tang, ``Rotate: Knowledge graph embedding by relational rotation in complex space,'' \emph{arXiv preprint arXiv:1902.10197}, 2019.

\bibitem{tpami2}
D.~Guo, H.~Wang, and M.~Wang, ``Context-aware graph inference with knowledge distillation for visual dialog,'' \emph{IEEE Transactions on Pattern Analysis and Machine Intelligence}, vol.~44, no.~10, pp. 6056--6073, 2022.

\bibitem{tkde7}
Y.~Cui, Z.~Sun, and W.~Hu, ``Transfer-and-fusion: Integrated link prediction across knowledge graphs,'' \emph{IEEE Transactions on Knowledge and Data Engineering}, vol.~37, no.~5, pp. 3062--3074, 2025.

\bibitem{teru2020inductive}
K.~Teru, E.~Denis, and W.~Hamilton, ``Inductive relation prediction by subgraph reasoning,'' in \emph{International conference on machine learning}.\hskip 1em plus 0.5em minus 0.4em\relax PMLR, 2020, pp. 9448--9457.

\bibitem{tpami3}
J.~Wang, Z.~Zhang, Z.~Shi, J.~Cai, S.~Ji, and F.~Wu, ``Duality-induced regularizer for semantic matching knowledge graph embeddings,'' \emph{IEEE Transactions on Pattern Analysis and Machine Intelligence}, vol.~45, no.~2, pp. 1652--1667, 2023.

\bibitem{wang2021relational}
H.~Wang, H.~Ren, and J.~Leskovec, ``Relational message passing for knowledge graph completion,'' in \emph{Proceedings of the 27th ACM SIGKDD Conference on Knowledge Discovery \& Data Mining}, 2021, pp. 1697--1707.

\bibitem{bordes2013translating}
A.~Bordes, N.~Usunier, A.~Garcia-Duran, J.~Weston, and O.~Yakhnenko, ``Translating embeddings for modeling multi-relational data,'' \emph{Advances in neural information processing systems}, vol.~26, 2013.

\bibitem{wang2014knowledge}
Z.~Wang, J.~Zhang, J.~Feng, and Z.~Chen, ``Knowledge graph embedding by translating on hyperplanes,'' in \emph{Proceedings of the AAAI Conference on Artificial Intelligence (AAAI)}, vol.~28, no.~1, 2014.

\bibitem{yang2014embedding}
B.~Yang, W.-t. Yih, X.~He, J.~Gao, and L.~Deng, ``Embedding entities and relations for learning and inference in knowledge bases,'' in \emph{International Conference on Learning Representations (ICLR)}, 2015.

\bibitem{jagvaral2020path}
B.~Jagvaral, W.-K. Lee, J.-S. Roh, M.-S. Kim, and Y.-T. Park, ``Path-based reasoning approach for knowledge graph completion using cnn-bilstm with attention mechanism,'' \emph{Expert Systems with Applications}, vol. 142, p. 112960, 2020.

\bibitem{tkde4}
W.~Guo, Z.~Li, X.~Wang, Z.~Chen, J.~Zhao, J.~Li, and Y.~Yuan, ``Convd: Attention enhanced dynamic convolutional embeddings for knowledge graph completion,'' \emph{IEEE Transactions on Knowledge and Data Engineering}, vol.~37, no.~9, pp. 5049--5062, 2025.

\bibitem{aaai25RP}
\BIBentryALTinterwordspacing
S.~Dutta, A.~Gittens, M.~J. Zaki, and C.~C. Aggarwal, ``Replacing paths with connection-biased attention for knowledge graph completion,'' ser. AAAI'25/IAAI'25/EAAI'25.\hskip 1em plus 0.5em minus 0.4em\relax AAAI Press, 2025. [Online]. Available: \url{https://doi.org/10.1609/aaai.v39i14.33632}
\BIBentrySTDinterwordspacing

\bibitem{zhang2022knowledge}
Y.~Zhang and Q.~Yao, ``Knowledge graph reasoning with relational digraph,'' in \emph{Proceedings of the ACM web conference 2022}, 2022, pp. 912--924.

\bibitem{zhang2024making}
Y.~Zhang, Z.~Chen, L.~Guo, Y.~Xu, W.~Zhang, and H.~Chen, ``Making large language models perform better in knowledge graph completion,'' in \emph{Proceedings of the 32nd ACM international conference on multimedia}, 2024, pp. 233--242.

\bibitem{cheng2024LLM4KG}
\BIBentryALTinterwordspacing
Y.~Zhu, X.~Wang, J.~Chen, S.~Qiao, Y.~Ou, Y.~Yao, S.~Deng, H.~Chen, and N.~Zhang, ``Llms for knowledge graph construction and reasoning: recent capabilities and future opportunities,'' \emph{World Wide Web}, vol.~27, no.~5, Aug. 2024. [Online]. Available: \url{https://doi.org/10.1007/s11280-024-01297-w}
\BIBentrySTDinterwordspacing

\bibitem{yao2019kg}
L.~Yao, C.~Mao, and Y.~Luo, ``Kg-bert: Bert for knowledge graph completion,'' \emph{arXiv preprint arXiv:1909.03193}, 2019.

\bibitem{Li2024ijcaiLLM4KG}
\BIBentryALTinterwordspacing
Q.~Li, Z.~Chen, C.~Ji, S.~Jiang, and J.~Li, ``Llm-based multi-level knowledge generation for few-shot knowledge graph completion,'' in \emph{Proceedings of the Thirty-Third International Joint Conference on Artificial Intelligence}, ser. IJCAI '24, 2024. [Online]. Available: \url{https://doi.org/10.24963/ijcai.2024/236}
\BIBentrySTDinterwordspacing

\bibitem{quintero2024integrating}
C.~E. Quintero-Narvaez and R.~Monroy, ``Integrating knowledge graph data with large language models for explainable inference,'' in \emph{Proceedings of the 17th ACM International Conference on Web Search and Data Mining}, 2024, pp. 1198--1199.

\bibitem{zhao2025kg4re}
\BIBentryALTinterwordspacing
Y.~Zhao, H.~Lin, S.~Wen, J.~Shen, and B.~Hua, ``Sma-gnn: A symbol-aware graph neural network for signed link prediction in recommender systems,'' in \emph{Proceedings of the 31st ACM SIGKDD Conference on Knowledge Discovery and Data Mining V.2}, ser. KDD '25.\hskip 1em plus 0.5em minus 0.4em\relax New York, NY, USA: Association for Computing Machinery, 2025, p. 3957–3967. [Online]. Available: \url{https://doi.org/10.1145/3711896.3737132}
\BIBentrySTDinterwordspacing

\bibitem{ibrahim2024survey}
N.~Ibrahim, S.~Aboulela, A.~Ibrahim, and R.~Kashef, ``A survey on augmenting knowledge graphs (kgs) with large language models (llms): models, evaluation metrics, benchmarks, and challenges,'' \emph{Discover Artificial Intelligence}, vol.~4, no.~1, p.~76, 2024.

\bibitem{tkde5}
S.~Liang, J.~Shao, D.~Zhang, J.~Zhang, and B.~Cui, ``Drgi: Deep relational graph infomax for knowledge graph completion,'' \emph{IEEE Transactions on Knowledge and Data Engineering}, vol.~35, no.~3, pp. 2486--2499, 2023.

\bibitem{10948338}
Q.~Wang, C.~Li, Y.~Liu, Q.~Zhu, J.~Song, and T.~Shen, ``An adaptive framework embedded with llm for knowledge graph construction,'' \emph{IEEE Transactions on Multimedia}, vol.~27, pp. 2912--2923, 2025.

\bibitem{long2024kgdm}
X.~Long, L.~Zhuang, A.~Li, J.~Wei, H.~Li, and S.~Wang, ``Kgdm: A diffusion model to capture multiple relation semantics for knowledge graph embedding,'' in \emph{Proceedings of the AAAI Conference on Artificial Intelligence}, vol.~38, no.~8, 2024, pp. 8850--8858.

\bibitem{long2024fact}
X.~Long, L.~Zhuang, A.~Li, H.~Li, and S.~Wang, ``Fact embedding through diffusion model for knowledge graph completion,'' in \emph{Proceedings of the ACM Web Conference 2024}, 2024, pp. 2020--2029.

\bibitem{ge2024knowledge}
X.~Ge, Y.~C. Wang, B.~Wang, C.-C.~J. Kuo \emph{et~al.}, ``Knowledge graph embedding: An overview,'' \emph{APSIPA Transactions on Signal and Information Processing}, vol.~13, no.~1, 2024.

\bibitem{cao2024knowledge}
J.~Cao, J.~Fang, Z.~Meng, and S.~Liang, ``Knowledge graph embedding: A survey from the perspective of representation spaces,'' \emph{ACM Computing Surveys}, vol.~56, no.~6, pp. 1--42, 2024.

\bibitem{trouillon2016complex}
T.~Trouillon, J.~Welbl, S.~Riedel, {\'E}.~Gaussier, and G.~Cevaert, ``Complex embeddings for simple link prediction,'' in \emph{Proceedings of the 33rd International Conference on Machine Learning (ICML)}, 2016, pp. 2071--2080.

\bibitem{song2021rotpro}
T.~Song, J.~Luo, and L.~Huang, ``Rot-pro: Modeling transitivity by projection in knowledge graph embedding,'' in \emph{Proceedings of the Thirty-Fifth Annual Conference on Advances in Neural Information Processing Systems ({NeurIPS})}, 2021.

\bibitem{zhang2020learning}
Z.~Zhang, J.~Cai, Y.~Zhang, and J.~Wang, ``Learning hierarchy-aware knowledge graph embeddings for link prediction,'' in \emph{Thirty-Fourth {AAAI} Conference on Artificial Intelligence}.\hskip 1em plus 0.5em minus 0.4em\relax {AAAI} Press, 2020, pp. 3065--3072.

\bibitem{Boxe2020}
R.~Abboud, u.~u. Ceylan, T.~Lukasiewicz, and T.~Salvatori, ``Boxe: a box embedding model for knowledge base completion,'' in \emph{Proceedings of the 34th International Conference on Neural Information Processing Systems}, ser. NIPS '20.\hskip 1em plus 0.5em minus 0.4em\relax Red Hook, NY, USA: Curran Associates Inc., 2020.

\bibitem{AnKGE2023}
\BIBentryALTinterwordspacing
Z.~Yao, W.~Zhang, M.~Chen, Y.~Huang, Y.~Yang, and H.~Chen, ``Analogical inference enhanced knowledge graph embedding,'' in \emph{Proceedings of the Thirty-Seventh AAAI Conference on Artificial Intelligence and Thirty-Fifth Conference on Innovative Applications of Artificial Intelligence and Thirteenth Symposium on Educational Advances in Artificial Intelligence}, ser. AAAI'23/IAAI'23/EAAI'23.\hskip 1em plus 0.5em minus 0.4em\relax AAAI Press, 2023. [Online]. Available: \url{https://doi.org/10.1609/aaai.v37i4.25605}
\BIBentrySTDinterwordspacing

\bibitem{tan2025paths}
X.~Tan, X.~Wang, Q.~Liu, X.~Xu, X.~Yuan, and W.~Zhang, ``Paths-over-graph: Knowledge graph empowered large language model reasoning,'' in \emph{Proceedings of the ACM on Web Conference 2025}, 2025, pp. 3505--3522.

\bibitem{wang2022simkgc}
\BIBentryALTinterwordspacing
L.~Wang, W.~Zhao, Z.~Wei, and J.~Liu, ``{S}im{KGC}: Simple contrastive knowledge graph completion with pre-trained language models,'' in \emph{Proceedings of the 60th Annual Meeting of the Association for Computational Linguistics (Volume 1: Long Papers)}, S.~Muresan, P.~Nakov, and A.~Villavicencio, Eds.\hskip 1em plus 0.5em minus 0.4em\relax Dublin, Ireland: Association for Computational Linguistics, May 2022, pp. 4281--4294. [Online]. Available: \url{https://aclanthology.org/2022.acl-long.295/}
\BIBentrySTDinterwordspacing

\bibitem{Long_Zhuang_Li_Wei_Li_Wang_2024}
\BIBentryALTinterwordspacing
X.~Long, L.~Zhuang, A.~Li, J.~Wei, H.~Li, and S.~Wang, ``Kgdm: A diffusion model to capture multiple relation semantics for knowledge graph embedding,'' \emph{Proceedings of the AAAI Conference on Artificial Intelligence}, vol.~38, no.~8, pp. 8850--8858, Mar. 2024. [Online]. Available: \url{https://ojs.aaai.org/index.php/AAAI/article/view/28732}
\BIBentrySTDinterwordspacing

\bibitem{ho2020denoising}
J.~Ho, A.~Jain, and P.~Abbeel, ``Denoising diffusion probabilistic models,'' \emph{Advances in neural information processing systems}, vol.~33, pp. 6840--6851, 2020.

\bibitem{chang2024path}
H.~Chang, J.~Ye, A.~Lopez-Avila, J.~Du, and J.~Li, ``Path-based explanation for knowledge graph completion,'' in \emph{Proceedings of the 30th ACM SIGKDD Conference on Knowledge Discovery and Data Mining}, 2024, pp. 231--242.

\bibitem{tkde6}
X.~Wang, K.~Zhang, M.~Niu, and X.~Wang, ``Semsi-gat: Semantic similarity-based interaction graph attention network for knowledge graph completion,'' \emph{IEEE Transactions on Knowledge and Data Engineering}, vol.~37, no.~5, pp. 2958--2970, 2025.

\bibitem{dutta2025replacing}
S.~Dutta, A.~Gittens, M.~J. Zaki, and C.~C. Aggarwal, ``Replacing paths with connection-biased attention for knowledge graph completion,'' in \emph{Proceedings of the AAAI Conference on Artificial Intelligence}, vol.~39, no.~14, 2025, pp. 14\,885--14\,892.

\bibitem{song2020score}
Y.~Song, J.~Sohl-Dickstein, D.~P. Kingma, A.~Kumar, S.~Ermon, and B.~Poole, ``Score-based generative modeling through stochastic differential equations,'' \emph{arXiv preprint arXiv:2011.13456}, 2020.

\bibitem{li2022srdiff}
H.~Li, Y.~Yang, M.~Chang, S.~Chen, H.~Feng, Z.~Xu, Q.~Li, and Y.~Chen, ``Srdiff: Single image super-resolution with diffusion probabilistic models,'' \emph{Neurocomputing}, vol. 479, pp. 47--59, 2022.

\bibitem{li2022diffusion}
X.~Li, J.~Thickstun, I.~Gulrajani, P.~S. Liang, and T.~B. Hashimoto, ``Diffusion-lm improves controllable text generation,'' \emph{Advances in Neural Information Processing Systems}, vol.~35, pp. 4328--4343, 2022.

\bibitem{tashiro2021csdi}
Y.~Tashiro, J.~Song, Y.~Song, and S.~Ermon, ``Csdi: Conditional score-based diffusion models for probabilistic time series imputation,'' \emph{Advances in Neural Information Processing Systems}, vol.~34, pp. 24\,804--24\,816, 2021.

\bibitem{chen2018neural}
R.~T. Chen, Y.~Rubanova, J.~Bettencourt, and D.~K. Duvenaud, ``Neural ordinary differential equations,'' \emph{Advances in neural information processing systems}, vol.~31, 2018.

\bibitem{lipman2023flow}
Y.~Lipman, R.~T. Chen, H.~Ben-Hamu, M.~Nickel, and M.~Le, ``Flow matching for generative modeling,'' in \emph{International Conference on Learning Representations (ICLR)}, 2023.

\bibitem{albergo2023stochastic}
M.~S. Albergo and E.~Vanden-Eijnden, ``Building normalizing flows with stochastic interpolants,'' in \emph{International Conference on Learning Representations (ICLR)}, 2023.

\bibitem{liu2022rectified}
Q.~Liu, ``Rectified flow: A marginal preserving approach to optimal transport,'' \emph{arXiv preprint arXiv:2209.14577}, 2022.

\bibitem{geng2025mean}
Z.~Geng, M.~Deng, X.~Bai, J.~Z. Kolter, and K.~He, ``Mean flows for one-step generative modeling,'' \emph{arXiv preprint arXiv:2505.13447}, 2025.

\bibitem{chen2024flowgeom}
R.~T. Chen and Y.~Lipman, ``Flow matching on general geometries,'' in \emph{International Conference on Learning Representations (ICLR)}, 2024.

\bibitem{pooladian2023multisample}
A.-A. Pooladian, H.~Ben-Hamu, C.~Domingo-Enrich, B.~Amos, Y.~Lipman, and R.~T. Chen, ``Multisample flow matching: Straightening flows with minibatch couplings,'' in \emph{International Conference on Learning Representations (ICLR)}, 2023.

\bibitem{tong2023improving}
\BIBentryALTinterwordspacing
A.~Tong, K.~Fatras, N.~Malkin, G.~Huguet, Y.~Zhang, J.~Rector-Brooks, G.~Wolf, and Y.~Bengio, ``Improving and generalizing flow-based generative models with minibatch optimal transport,'' 2023. [Online]. Available: \url{https://arxiv.org/abs/2302.00482}
\BIBentrySTDinterwordspacing

\bibitem{ma2024sit}
N.~Ma, M.~Goldstein, M.~S. Albergo, N.~M. Boffi, E.~Vanden-Eijnden, and S.~Xie, ``Sit: Exploring flow and diffusion-based generative models with scalable interpolant transformers,'' in \emph{European Conference on Computer Vision}.\hskip 1em plus 0.5em minus 0.4em\relax Springer, 2024, pp. 23--40.

\bibitem{gat2024discrete}
I.~Gat, T.~Remez, N.~Shaul, F.~Kreuk, R.~T. Chen, G.~Synnaeve, Y.~Adi, and Y.~Lipman, ``Discrete flow matching,'' \emph{Advances in Neural Information Processing Systems}, vol.~37, pp. 133\,345--133\,385, 2024.

\bibitem{hamilton2017inductive}
W.~Hamilton, Z.~Ying, and J.~Leskovec, ``Inductive representation learning on large graphs,'' \emph{Advances in neural information processing systems}, vol.~30, 2017.

\bibitem{vaswani2017attention}
A.~Vaswani, ``Attention is all you need,'' \emph{Advances in Neural Information Processing Systems}, 2017.

\bibitem{cao2024diffusione}
Z.~Cao, J.~Li, Z.~Wang, and J.~Li, ``Diffusione: Reasoning on knowledge graphs via diffusion-based graph neural networks,'' in \emph{Proceedings of the 30th ACM SIGKDD Conference on Knowledge Discovery and Data Mining}, 2024, pp. 222--230.

\bibitem{Schrijver2003}
A.~Schrijver, \emph{Combinatorial Optimization}.\hskip 1em plus 0.5em minus 0.4em\relax Berlin; New York: Springer, 2003, see p. 362.

\bibitem{toutanova-chen-2015-observed}
K.~Toutanova and D.~Chen, ``Observed versus latent features for knowledge base and text inference,'' in \emph{Proceedings of the 3rd Workshop on Continuous Vector Space Models and their Compositionality}.\hskip 1em plus 0.5em minus 0.4em\relax Beijing, China: Association for Computational Linguistics, Jul. 2015, pp. 57--66.

\bibitem{dettmers2018convolutional}
T.~Dettmers, P.~Minervini, P.~Stenetorp, and S.~Riedel, ``Convolutional 2d knowledge graph embeddings,'' in \emph{Proceedings of the AAAI Conference on Artificial Intelligence (AAAI)}, vol.~32, no.~1, 2018.

\bibitem{xiong2017deeppath}
W.~Xiong, T.~Hoang, and W.~Y. Wang, ``Deeppath: A reinforcement learning method for knowledge graph reasoning,'' \emph{arXiv preprint arXiv:1707.06690}, 2017.

\bibitem{kazemi2018simple}
S.~M. Kazemi and D.~Poole, ``Simple embedding for link prediction in knowledge graphs,'' \emph{Advances in neural information processing systems}, vol.~31, 2018.

\bibitem{zhang2019quaternion}
S.~Zhang, Y.~Tay, L.~Yao, and Q.~Liu, ``Quaternion knowledge graph embeddings,'' \emph{Advances in neural information processing systems}, vol.~32, 2019.

\bibitem{sadeghian2019drum}
A.~Sadeghian, M.~Armandpour, P.~Ding, and D.~Z. Wang, ``Drum: End-to-end differentiable rule mining on knowledge graphs,'' \emph{Advances in neural information processing systems}, vol.~32, 2019.

\bibitem{peng2022path}
Z.~Peng, H.~Yu, and X.~Jia, ``Path-based reasoning with k-nearest neighbor and position embedding for knowledge graph completion,'' \emph{Journal of Intelligent Information Systems}, pp. 1--21, 2022.

\bibitem{schlichtkrull2018modeling}
M.~Schlichtkrull, T.~N. Kipf, P.~Bloem, R.~van~den Berg, I.~Titov, and M.~Welling, ``Modeling relational data with graph convolutional networks,'' \emph{The Semantic Web}, vol. 10843, pp. 593--607, 2018.

\bibitem{shen2022joint}
J.~Shen, C.~Wang, L.~Gong, and D.~Song, ``Joint language semantic and structure embedding for knowledge graph completion,'' \emph{arXiv preprint arXiv:2209.08721}, 2022.

\bibitem{nadkarni2021scientific}
R.~Nadkarni, D.~Wadden, I.~Beltagy, N.~A. Smith, H.~Hajishirzi, and T.~Hope, ``Scientific language models for biomedical knowledge base completion: An empirical study,'' \emph{arXiv preprint arXiv:2106.09700}, 2021.

\bibitem{nassiri2022knowledge}
A.~K. Nassiri, N.~Pernelle, F.~Sa{\"\i}s, and G.~Quercini, ``Knowledge graph refinement based on triplet bert-networks,'' \emph{arXiv preprint arXiv:2211.10460}, 2022.

\bibitem{meilicke2018fine}
C.~Meilicke, M.~Fink, Y.~Wang, D.~Ruffinelli, R.~Gemulla, and H.~Stuckenschmidt, ``Fine-grained evaluation of rule-and embedding-based systems for knowledge graph completion,'' in \emph{The semantic web--ISWC 2018: 17th international semantic web conference, Monterey, CA, USA, October 8--12, 2018, proceedings, part I 17}.\hskip 1em plus 0.5em minus 0.4em\relax Springer, 2018, pp. 3--20.

\bibitem{yang2017differentiable}
F.~Yang, Z.~Yang, and W.~W. Cohen, ``Differentiable learning of logical rules for knowledge base reasoning,'' \emph{Advances in neural information processing systems}, vol.~30, 2017.

\bibitem{vanDerMaaten2008visualizing}
L.~Van~der Maaten and G.~Hinton, ``Visualizing data using t-sne,'' \emph{Journal of machine learning research}, vol.~9, no. Nov, pp. 2579--2605, 2008.

\bibitem{hinton1986kinship}
G.~Hinton, ``Kinship,'' UCI Machine Learning Repository, 1986.

\bibitem{bodenreider2004unified}
O.~Bodenreider, ``The unified medical language system (umls): integrating biomedical terminology,'' \emph{Nucleic Acids Research}, vol.~32, no. suppl\_1, pp. D267--D270, 2004.

\bibitem{wang2023tdn}
J.~Wang, B.~Wang, J.~Gao, X.~Li, Y.~Hu, and B.~Yin, ``Tdn: Triplet distributor network for knowledge graph completion,'' \emph{IEEE Transactions on Knowledge and Data Engineering}, vol.~35, no.~12, pp. 13\,002--13\,014, 2023.

\bibitem{qu2020rnnlogic}
M.~Qu, J.~Chen, L.-P. Xhonneux, Y.~Bengio, and J.~Tang, ``Rnnlogic: Learning logic rules for reasoning on knowledge graphs,'' \emph{arXiv preprint arXiv:2010.04029}, 2020.

\bibitem{NBFnet2021}
\BIBentryALTinterwordspacing
Z.~Zhu, Z.~Zhang, L.-P. Xhonneux, and J.~Tang, ``Neural bellman-ford networks: A general graph neural network framework for link prediction,'' in \emph{Advances in Neural Information Processing Systems}, M.~Ranzato, A.~Beygelzimer, Y.~Dauphin, P.~Liang, and J.~W. Vaughan, Eds., vol.~34.\hskip 1em plus 0.5em minus 0.4em\relax Curran Associates, Inc., 2021, pp. 29\,476--29\,490. [Online]. Available: \url{https://proceedings.neurips.cc/paper_files/paper/2021/file/f6a673f09493afcd8b129a0bcf1cd5bc-Paper.pdf}
\BIBentrySTDinterwordspacing

\bibitem{zhang2023adaprop}
Y.~Zhang, Z.~Zhou, Q.~Yao, X.~Chu, and B.~Han, ``Adaprop: Learning adaptive propagation for graph neural network based knowledge graph reasoning,'' in \emph{Proceedings of the 29th ACM SIGKDD conference on knowledge discovery and data mining}, 2023, pp. 3446--3457.

\end{thebibliography}

\begin{IEEEbiography}[{\includegraphics[width=0.9in,height=1.25in,clip,keepaspectratio]{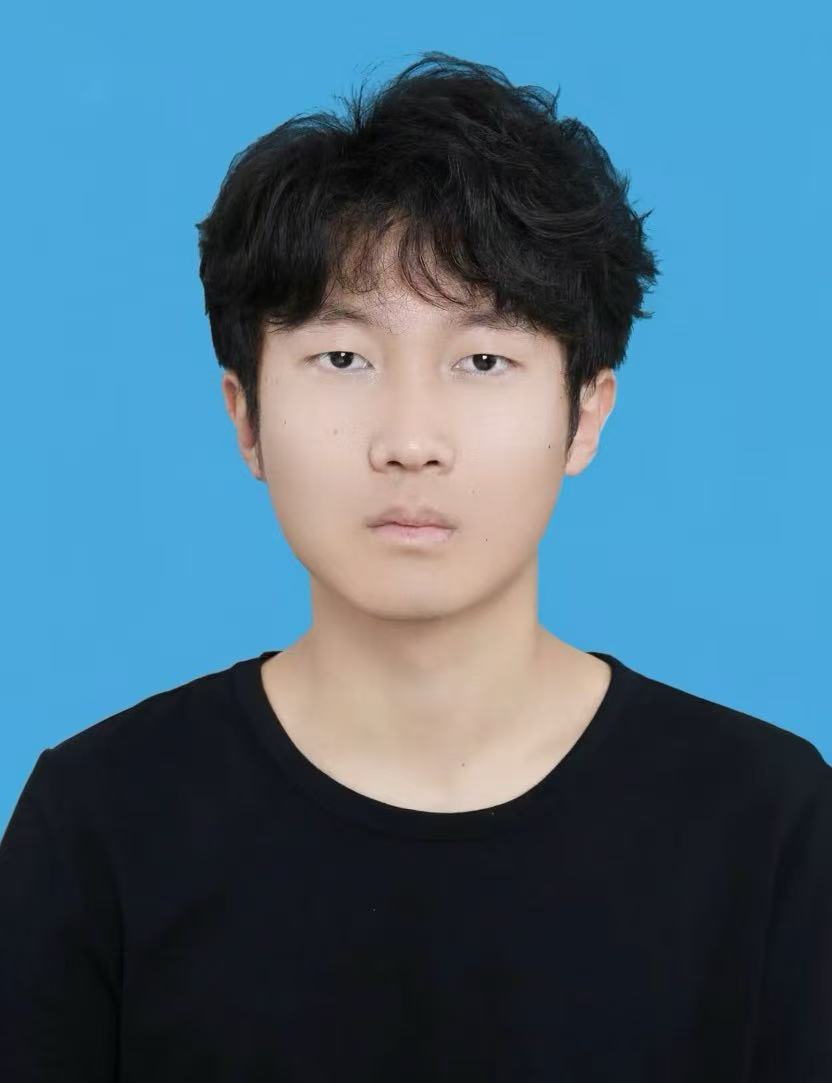}}]{Siyuan Li}
is currently a senior undergraduate student in Computer Science and Technology at Dalian University of Technology. He is also currently an intern at the School of Artificial Intelligence, Nanjing University. His research interests include knowledge representation and reasoning, data mining, and human-computer interaction and visualization.
\end{IEEEbiography}

\begin{IEEEbiography}[{\includegraphics[width=0.9in,height=1.25in,clip,keepaspectratio]{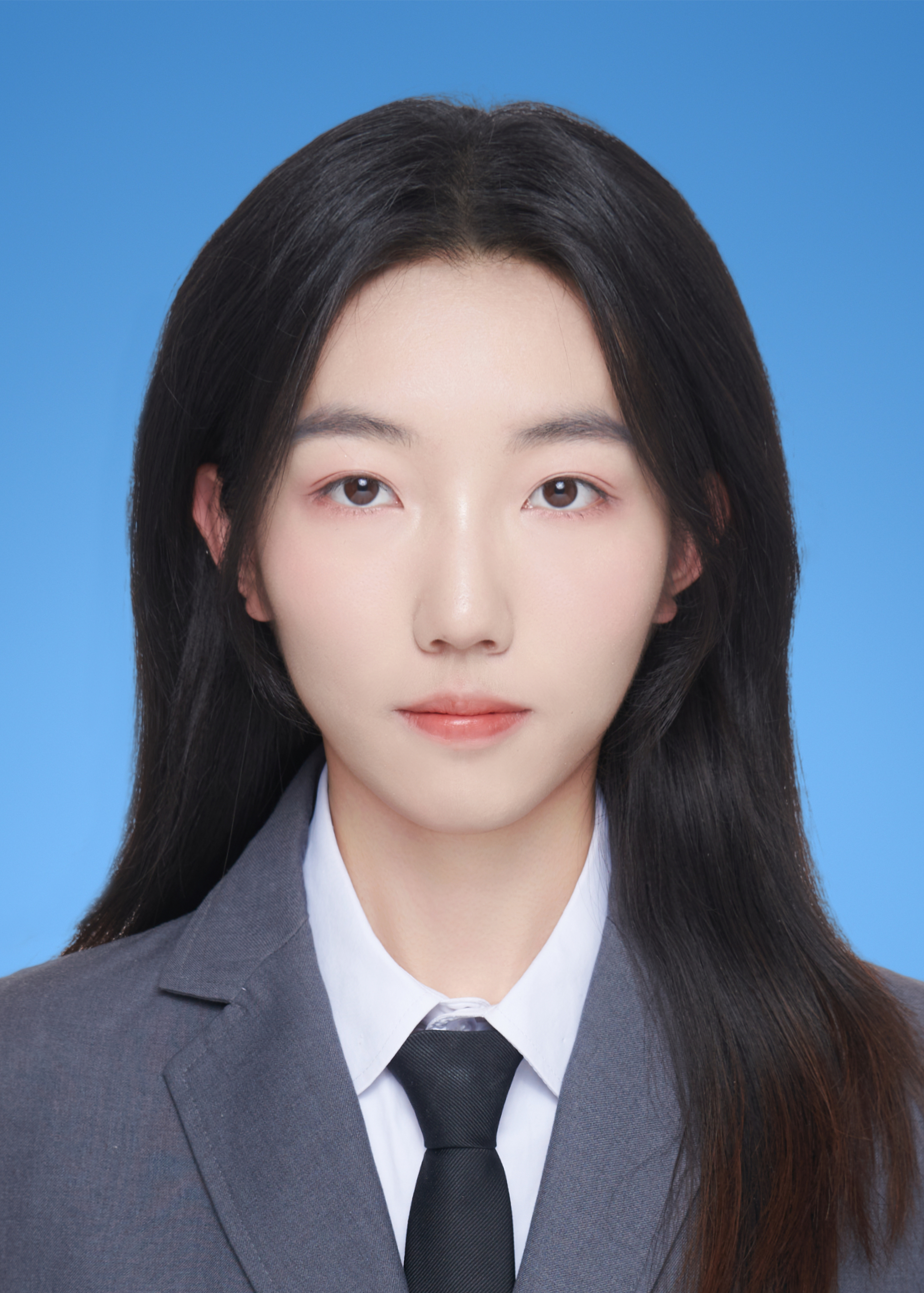}}]{Ruitong Liu}
is currently a senior undergraduate student majoring in Mathematics at Dalian University of Technology. She has been admitted to pursue a postgraduate degree at Peking University. Her research interests include large language models, data-centric AI, and knowledge graphs.
\end{IEEEbiography}

\begin{IEEEbiography}[{\includegraphics[width=0.9in,height=1.25in,clip,keepaspectratio]{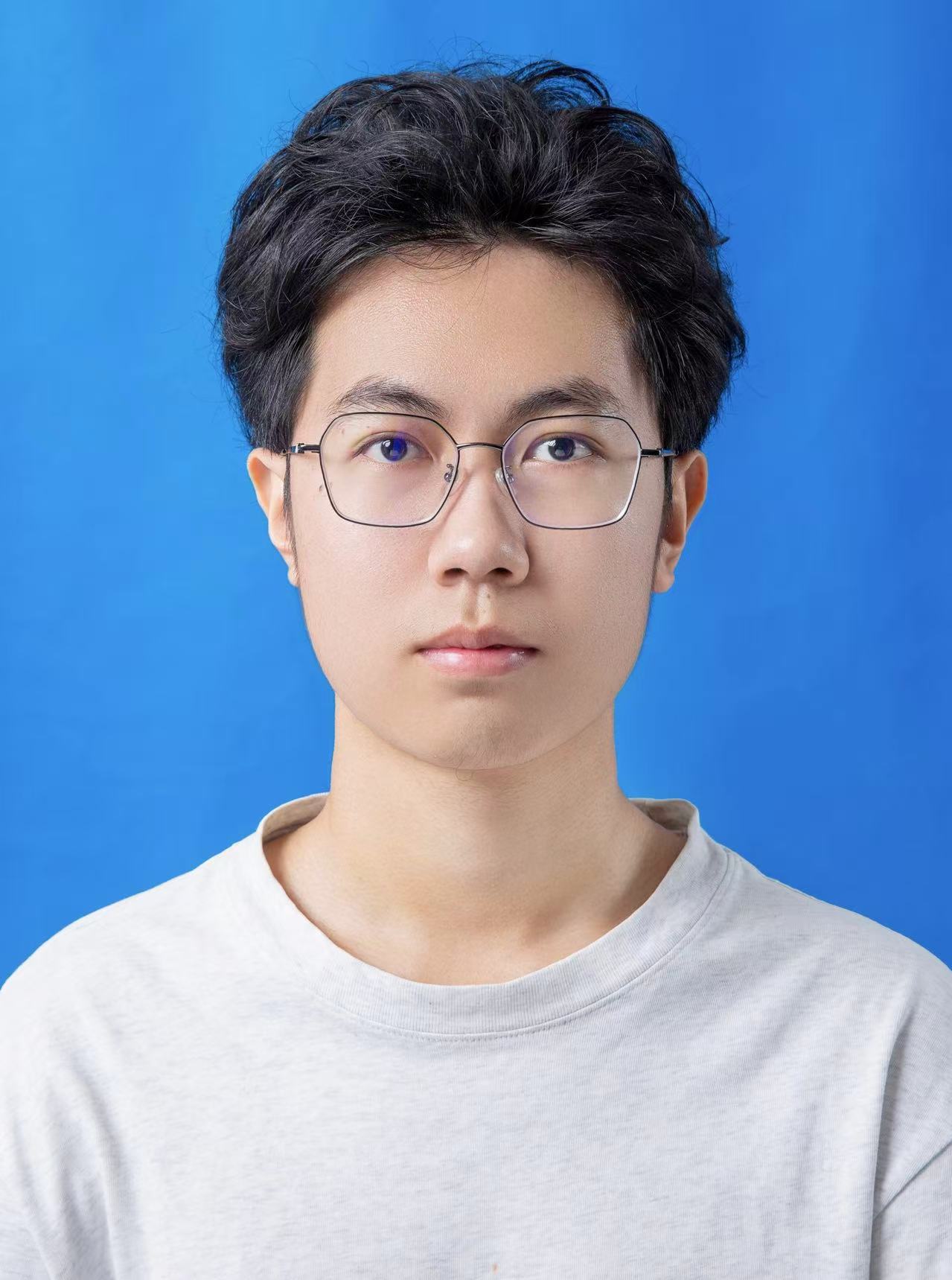}}]{Yan Wen}
is currently a senior undergraduate student in Vehicle Engineering at Beijing Institute of Technology. He is conducting research on autonomous driving, machine learning, and multimodal prediction and planning.\end{IEEEbiography}

\begin{IEEEbiography}[{\includegraphics[width=0.9in,height=1.25in,clip,keepaspectratio]{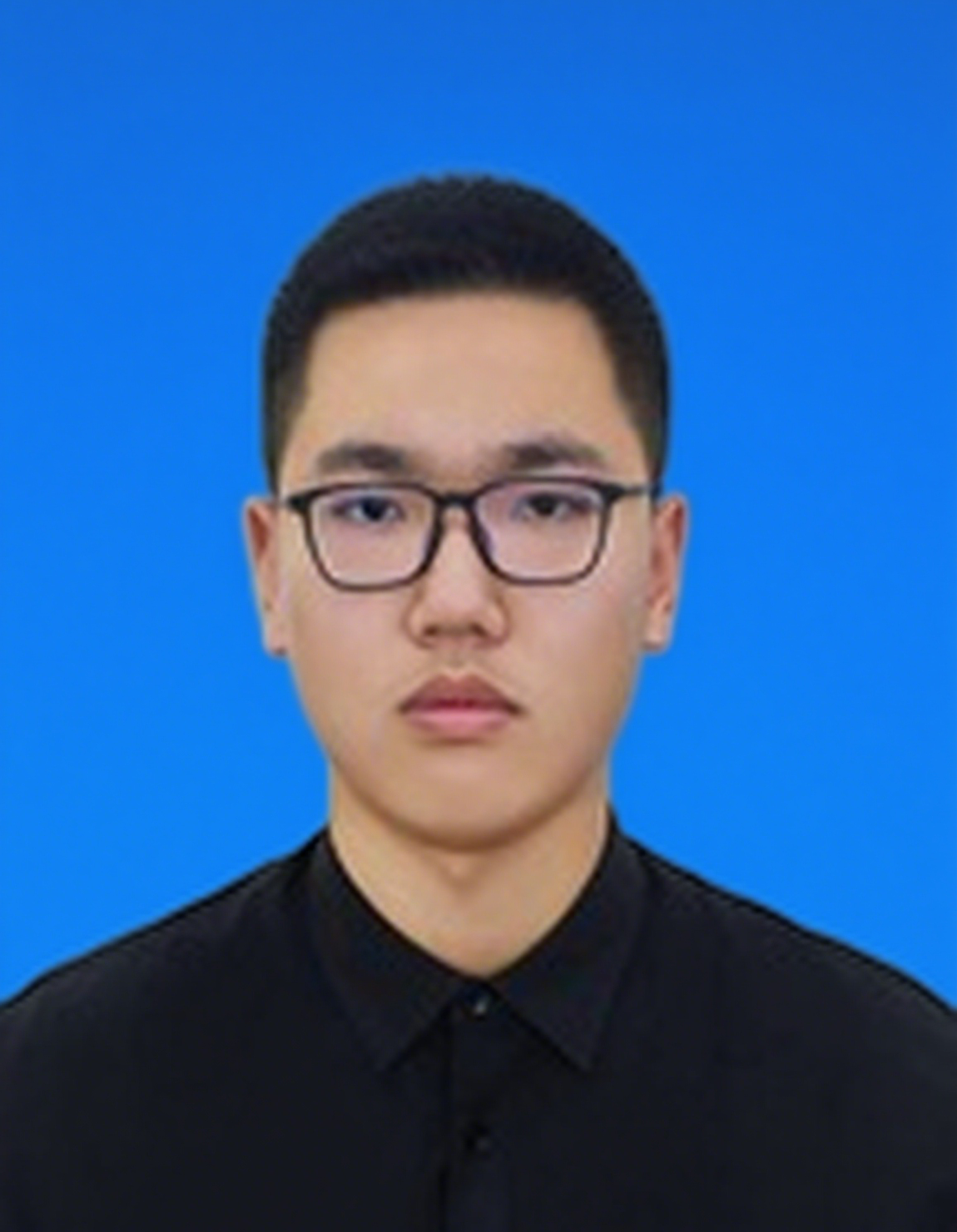}}]{Te Sun}
is a senior undergraduate student of School of Automation Science and Engineering, Dalian University of Technology.He is interested in temporal knowledge graph question answering and robust knowledge graph reasoning. He also has a keen interest in ai agent for science.
\end{IEEEbiography}

\begin{IEEEbiography}[{\includegraphics[width=0.9in,height=1.25in,clip,keepaspectratio]{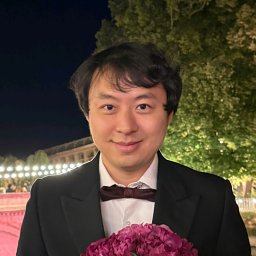}}]
{Andi Zhang}
is a Machine Learning Research Associate at the Centre for AI Fundamentals, University of Manchester, U.K. He received the B.Sc. degree from the University of Manchester, and the M.Phil., M.A.St., and Ph.D. degrees from the University of Cambridge. His/Her research interests include generative AI, machine learning, and AI fundamentals.
\end{IEEEbiography}

\begin{IEEEbiography}[{\includegraphics[width=0.9in,height=1.25in,clip,keepaspectratio]{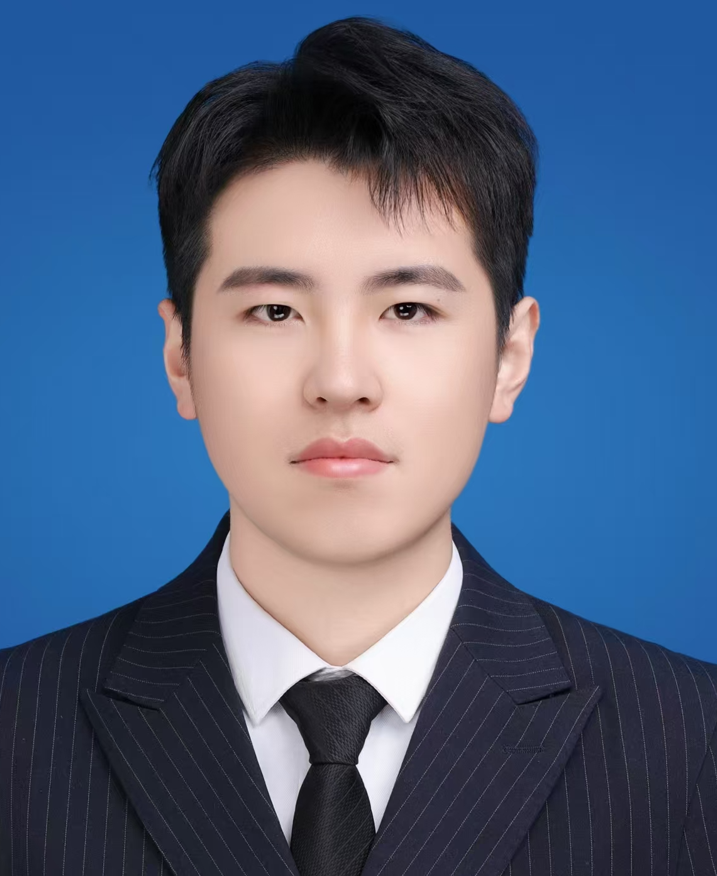}}]{Yanbiao Ma}
received his B.Eng. in Intelligent Science and Technology from Xidian University in 2020 and his Ph.D. in Computer Science and Technology from Xidian University in 2025. He is currently an Assistant Professor at the Gaoling School of Artificial Intelligence, Renmin University of China. His research focuses on multimodal large models, computational neuroscience, long-tailed learning, and federated learning.
\end{IEEEbiography}

\begin{IEEEbiography}[{\includegraphics[width=1in,height=1.25in,clip,keepaspectratio]{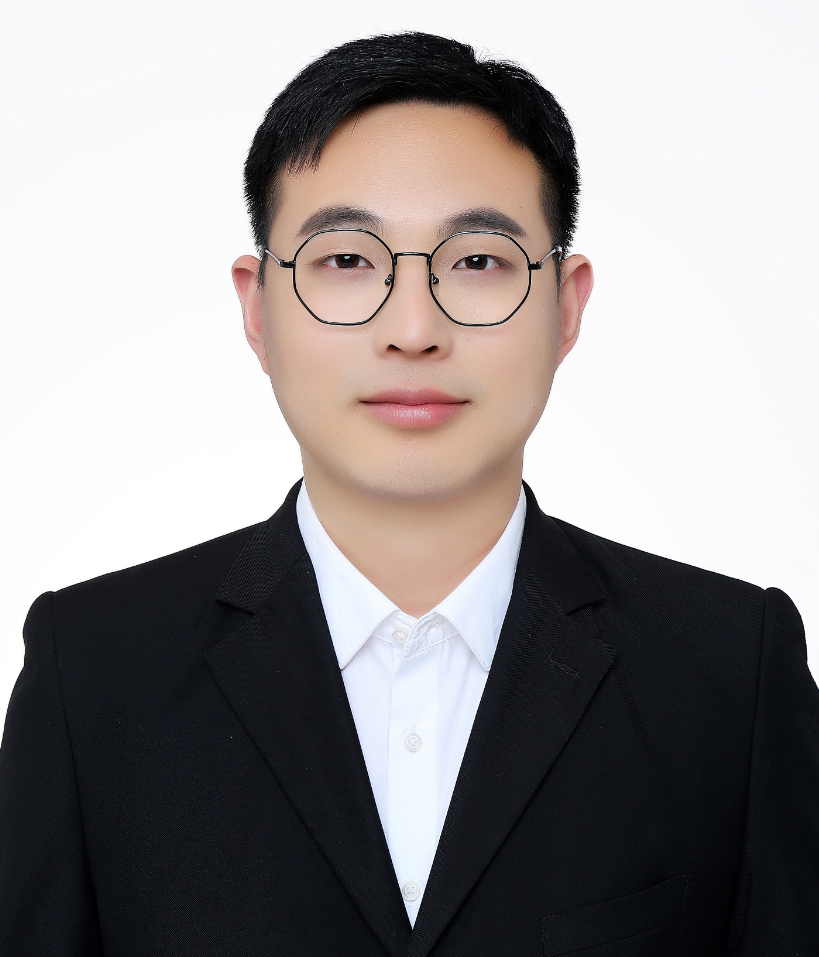}}]{Xiaoshuai Hao}
		received his Ph.D. from the Institute of Information Engineering, Chinese Academy of Sciences, in 2023. He is currently a researcher at the Beijing Academy of Artificial Intelligence, specializing in embodied multimodal large models. His research interests encompass embodied intelligence, multimodal learning, and autonomous driving. Dr. Hao has published over 40 papers in top-tier journals and conferences, including TIP, Information Fusion, NeurIPS, ICLR, ICML, CVPR, ICCV, ECCV, ACL, AAAI, and ICRA. He has achieved outstanding results in international competitions, securing top-three placements at prestigious conferences like CVPR and ICCV. Additionally, he serves on the editorial board of Data Intelligence and is an organizer for the RoDGE Workshop at ICCV 2025 and The RoboSense Challenge at IROS 2025.
\end{IEEEbiography}



%




\end{document}